\ifdefined\pdftexversion\pdfoutput=1\fi 
\documentclass{article}
\usepackage[T1]{fontenc}
\usepackage{conference,times}
\conffinalcopy

\usepackage{amsmath,amssymb,amsthm}
\usepackage{booktabs}
\usepackage{multirow}
\usepackage{tabularx}
\usepackage{graphicx}
\usepackage{flafter}
\usepackage{placeins}
\usepackage{listings}
\usepackage{algorithm}
\usepackage{algpseudocode}
\usepackage{tikz}
\usetikzlibrary{positioning,arrows.meta,fit,calc}
\usepackage{xcolor}
\usepackage[most]{tcolorbox}
\usepackage[hidelinks]{hyperref}
\usepackage{url}

\definecolor{achieve}{RGB}{31,119,120}
\definecolor{refute}{RGB}{178,58,50}
\definecolor{bdh}{RGB}{110,110,118}

\newcommand{\algdelta}[1]{\begingroup\setlength{\fboxsep}{0.65pt}\colorbox{achieve!13}{#1}\endgroup}
\newcommand{\algside}[1]{\Comment{\parbox[t]{.34\linewidth}{\raggedright\footnotesize #1}}}

\newtheorem{theorem}{Theorem}
\newtheorem{lemma}{Lemma}
\newcommand{\F}{\mathbb{F}}
\newcommand{\artifactlink}{\href{https://github.com/LeGenAI/llm_guided_EAQECC_search/tree/arxiv-v1}{artifact}}
\newcommand{\vct}[1]{\boldsymbol{#1}} 
\newcommand{\wt}{\operatorname{wt_s}}
\newcommand{\cmins}{c^{\mathrm{s}}_{\min}}
\newcommand{\claim}[1]{\textbf{\texttt{#1}}}   

\title{Researcher-in-the-loop\\
  Evolutionary Discovery of\\
  entanglement-assisted quantum code families}
\author{Jae-Hyun Baek$^{1,2}$ \quad Byung-Sun Won$^{3}$ \quad Jon-Lark Kim$^{4,2}$\thanks{Corresponding author: \texttt{jlkim@sogang.ac.kr}} \\
$^{1}$Department of Artificial Intelligence, Sogang University, Seoul, Korea \\
$^{2}$Institute for Mathematical and Data Sciences, Sogang University, Seoul, Korea \\
$^{3}$Ewha Institute of Mathematical Sciences, Ewha Womans University, Seoul, Korea \\
$^{4}$Department of Mathematics, Sogang University, Seoul, Korea}

\begin{document}
\maketitle
\lhead{}\renewcommand{\headrulewidth}{0pt}

\begin{abstract}
LLM-guided program evolution can produce mathematical constructions,
but validity, generality, and reproducibility require distinct evidence.
We develop a researcher-in-the-loop discovery pipeline that combines
program evolution, exact evaluation, mathematical generalization,
certified nonexistence checks, and formal proof checking. We apply it
to entanglement-assisted quantum error-correcting codes (EAQECCs).
Starting from tasks that did not specify a family formula, evolved
programs produced finite qubit constructions whose structure researchers
generalized to the family $[[n,1,n-1;n-q-1]]_q$ for prime powers
$q\ge3$ at $n\ge2q+1$ and for $q=2$ at odd $n\ge5$. Lean checks the
algebraic certificate underlying this family. Together with entanglement
lifting, the family closes 54 gaps in archived EAQECC tables. The
released evidence includes 115 verified code records, ten nonexistence
results with checked DRAT proofs, family implementations, and preserved
search programs for exact checks and seeded replay. In a controlled
cohort of eight paired generation blocks, validation-selected iterative
programs achieve success in 46.1\% and 31.6\% of held-out executions at
unseen lengths $n=13,15$, compared with 19.5\% and 3.1\% for independently
generated programs. In this setting, program evolution and researcher
analysis turn finite computational discoveries into a proved family
with independently checkable evidence.
\end{abstract}

\section{Introduction}

LLM-guided program search has produced mathematical constructions and
algorithms~\citep{FunSearch,AlphaEvolve}. Their outputs can also support
general mathematical results through researcher analysis. This raises
three questions: \emph{whether evolved programs produce valid codes,
whether their structure can be generalized and proved to work beyond
the searched cases, and whether the resulting constructions and preserved
searches can be independently reproduced}. Addressing these questions
requires records of the initial task, computational discoveries and
subsequent mathematical analysis.

We study this process for entanglement-assisted quantum error-correcting
codes (EAQECCs). Early AlphaEvolve searches produced finite qubit codes
whose structure led us to an infinite qubit family. Further mathematical
generalization gives the prime-power construction and proof in
Theorem~\ref{thm:unified}. The initial instructions suggested directions
such as cyclic shifts, but did not provide this family formula.
Later instructions explicitly incorporated the discovered qubit family
(Appendix~\ref{app:prompts}), allowing it to guide further searches.

Our reproducibility contribution concerns these mathematical
constructions and their computational evidence. The \artifactlink{} supplies
code matrices, family implementations, proofs, task specifications and
preserved search programs. Readers can recompute code parameters,
construct and verify family instances, inspect the proof and replay
preserved searches. Figure~\ref{fig:overview} summarizes how researcher
analysis and program evolution interact. Section~\ref{sec:repro}
specifies which operations are reproducible. A final Lean stage checks
the algebraic construction underlying Theorem~\ref{thm:unified} (Appendix~\ref{app:lean}); model generation
remains stochastic.

Several improvements concern constructions published in 2020 and 2022
that remained listed in the EAQECC tables accessed on 17 July
2026~\citep{GHW2020,Luo2022}. Our constructions and certified exclusions
close nine previously tabulated qubit distance gaps (Table~\ref{tab:results}).
The prior constructions predate our table access by approximately four
to six years. Their dates document available constructions rather than
a complete history of improvement attempts.

Our contributions are:
\begin{itemize}\setlength{\itemsep}{1pt}
\item \textbf{A researcher-in-the-loop discovery pipeline.}
  We connect LLM-guided program evolution with researcher proposals,
  mathematical generalization and formal proof checking for EAQECC
  discovery. Preserved programs, code matrices and formal certificates
  support reconstruction and independent verification of the resulting
  constructions (Sections~\ref{sec:pipeline} and~\ref{sec:repro}).
\item \textbf{General constructions from evolved program outputs.}
  Finite qubit constructions led to the general family of
  Theorem~\ref{thm:unified}, whose algebraic construction is checked in Lean.
  The family and entanglement lifting close 54 listed gaps, while checked
  nonexistence certificates establish selected optimality and
  minimum-entanglement results (Section~\ref{sec:results}).
\item \textbf{Program development and generalization.} We compare
  independent and iterative generation under shared research guidance.
  Eight new generation blocks yield longer-code success of 46.1\% and
  31.6\% for iterative programs, versus 19.5\% and 3.1\% for independent
  proposals at $n=13,15$. Preserved lineages identify useful objective and
  search-policy changes (Section~\ref{sec:ablation}).
\end{itemize}

\section{Background and related work}\label{sec:related}

\subsection{Entanglement-assisted quantum error-correcting codes}
\label{sec:ea-background}

Quantum error correction protects quantum information sent through noisy
communication links. Early stabilizer constructions imposed algebraic
compatibility constraints that limited which classical codes could be
reused. \citet{BrunDevetakHsieh2006} showed that entanglement shared in
advance between sender and receiver relaxes these constraints, enabling
a broader class of quantum codes. This is the motivation for
entanglement-assisted quantum error-correcting codes (EAQECCs).

A code with parameters $[[n,k,d;c]]_q$ encodes $k$ logical qudits into
$n$ transmitted $q$-level systems using $c$ shared maximally entangled
qudit pairs. Each pair contains $\log_2 q$ ebits, equal to one for
qubits ($q=2$)~\citep{GHW2022}. The receiver's
halves are assumed noiseless and are not counted in $n$.

The distance $d$ determines the code's guaranteed error protection.
With ideal decoding, it corrects arbitrary errors on up to
$\lfloor(d-1)/2\rfloor$ transmitted qudits, or up to $d-1$
\emph{erasures}, where the lost qudits' locations are
known~\citep{BrunDevetakHsieh2006,GHW2022}. Increasing $d$ at fixed
$(q,n,k,c)$ strengthens protection without increasing the transmission or
entanglement budget. For example, our $[[12,1,11;8]]_2$ code raises
the listed distance from 9 to 11, increasing the guaranteed correction
of arbitrary errors from four to five transmitted qubits with the same
resources (Table~\ref{tab:results}). A verified code thus provides an
encoding specification for a quantum link with given resource and
error-protection requirements. Reducing $c$ at fixed $(n,k,d)$ instead
lowers the entanglement needed before transmission~\citep{Luo2022}.

\subsection{Code discovery as program search}
\label{sec:tables-search-background}

We search linear stabilizer EAQECCs represented by generator matrices
over $\F_q$. The LLM revises a search program that proposes matrices;
an exact evaluator computes their $(k,c,d)$. At fixed $(q,n,k,c)$,
reference tables give a constructed lower distance bound $d_\ell$ and
an upper bound $d_u$~\citep{Codetables,Luo2022}. Finding $d>d_\ell$
improves the lower bound, while reaching $d_u$ closes the gap.
We also construct codes at unlisted cells and minimize entanglement,
denoted $\cmins(n,k,d)$ within our stabilizer class. Applicable bounds
filter impossible targets~\citep{GHW2022}; Appendix~\ref{app:domain}
gives the representation, notation and comparison rules.

\textsc{FunSearch} evolves programs that construct mathematical objects or
make algorithmic decisions, using an evaluator to select useful revisions.
It produced larger cap-set constructions and improved online bin-packing
heuristics~\citep{FunSearch}; its executable outputs let researchers inspect
and develop the underlying mathematical ideas. \textsc{AlphaEvolve}
extends this approach to matrix-multiplication algorithms, data-center
scheduling, hardware-circuit simplification and model-training
kernels~\citep{AlphaEvolve,Georgiev2025}. Related frameworks include
\textsc{OpenEvolve}, \textsc{CodeEvolve}, \textsc{ShinkaEvolve} and
\textsc{TurboEvolve}, which implement or extend evaluator-guided program
evolution with population management and proposal-selection
strategies~\citep{OpenEvolve,CodeEvolve,ShinkaEvolve,TurboEvolve}.
We use \textsc{AlphaEvolve} as the evolution engine for EAQECC discovery,
supplying target parameters, an exact evaluator and researcher-directed
construction ideas.

Quantum-code discovery also uses genetic algorithms to evolve stabilizer
representations~\citep{WebsterBrowne2024}, and reinforcement learning to
optimize codes and encoders~\citep{Nautrup2019,Su2023,Mauron2023,Olle2024}.
More directly, \citet{CruzBenito2026} evolve Python generators for
bivariate bicycle codes, verify candidates, and derive structural results.
Our contribution concerns EAQECC distance--entanglement targets: exact
finite-field evaluation supports finite constructions, a prime-power
family with a Lean-checked algebraic certificate, and certified
nonexistence results. The controlled study compares independent proposals
with iterative program feedback under a shared researcher proposal.


\section{The researcher-in-the-loop discovery pipeline}\label{sec:pipeline}

Our discovery pipeline combines human-in-the-loop (HITL) research with
LLM-guided program evolution and exact verification. Researchers propose
construction ideas, inspect verified instances and revise the task between
campaigns. Figure~\ref{fig:overview} connects these interventions to the
automated search and evaluation loop. Constructed codes establish lower
distance bounds; SAT refutations with checked certificates lower upper
bounds. Independent verification and archived evidence support both routes.
Researchers generalize the discovered constructions and develop proofs,
then pass the formalized statements and proofs to the Lean checking stage.

\begin{figure}[H]
\centering
\resizebox{\linewidth}{!}{
\definecolor{ovink}{HTML}{203849}%
\definecolor{ovblue}{HTML}{2C5F8A}%
\definecolor{ovteal}{HTML}{1F7778}%
\definecolor{ovhuman}{HTML}{985B23}%
\definecolor{ovred}{HTML}{A54339}%
\begin{tikzpicture}[x=1cm,y=1cm,>=Latex,
  font=\rmfamily\fontsize{9.2}{10.8}\selectfont,text=ovink,
  flow/.style={->,draw=ovink!85,line width=.95pt},
  humanflow/.style={->,draw=ovhuman,line width=.95pt,dashed},
  small/.style={font=\rmfamily\fontsize{8.2}{9.8}\selectfont},
  label/.style={font=\rmfamily\bfseries\fontsize{9.2}{10.5}\selectfont},
  card/.style={rounded corners=3pt,draw=ovink!40,line width=.75pt,fill=white}]
\path[use as bounding box] (-.03,.24) rectangle (14.03,7.16);

\begin{scope}[yshift=.65cm]
\filldraw[rounded corners=4pt,fill=ovhuman!6,draw=ovhuman!40,line width=.65pt]
  (0,5.18) rectangle (9.95,6.48);
\fill[ovhuman!12] (.17,5.34) rectangle (2.75,6.32);
\draw[ovhuman,line width=1pt] (.53,6.00) circle (.10);
\draw[ovhuman,line width=1pt,rounded corners=2pt] (.34,5.62) -- (.34,5.75)
  .. controls (.36,5.92) and (.70,5.92) .. (.72,5.75) -- (.72,5.62);
\node[label,anchor=west,text=ovhuman] at (.88,6.03) {Researcher};
\node[small,anchor=west,text=ovhuman] at (.88,5.63) {in the loop};
\node[label,align=center,text width=3.35cm,inner sep=0pt] at (4.30,6.05)
  {Propose and revise ideas};
\node[small,align=center] at (4.30,5.48) {between campaigns};
\node[label,align=center,text width=3.35cm,inner sep=0pt] at (8.00,6.05)
  {Generalize and prove};
\node[small,align=center] at (8.00,5.48) {from discovered codes};
\draw[card,draw=ovteal!65] (10.65,5.18) rectangle (14,6.48);
\node[inner sep=0pt] at (12.325,6.00)
  {\includegraphics[width=1.55cm]{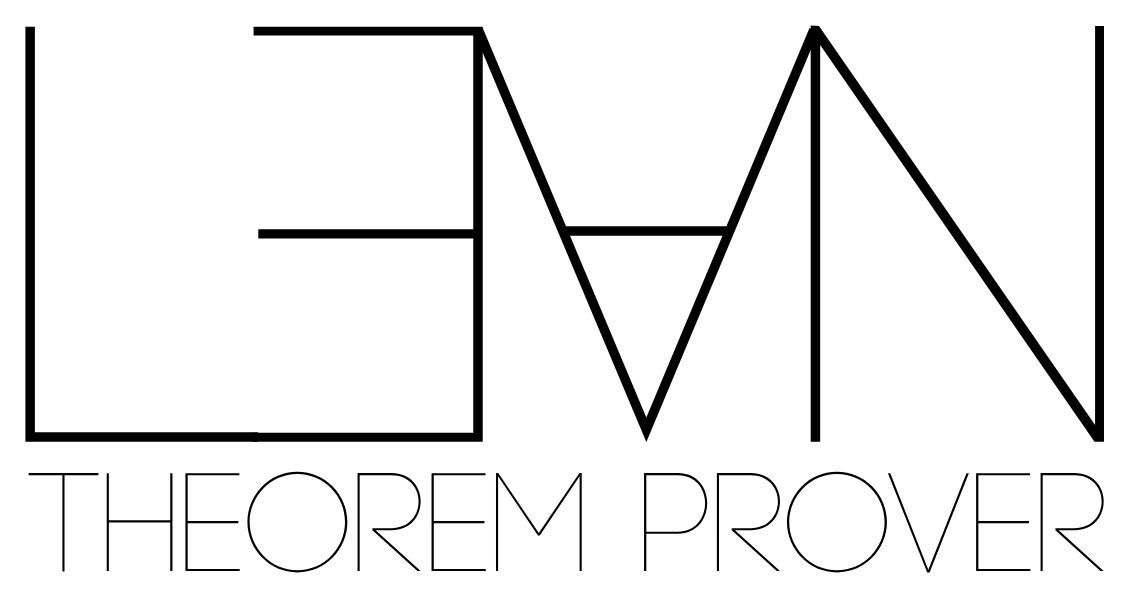}};
\node[small,text=ovteal] at (12.325,5.39) {Machine-check proofs};
\draw[flow,ovteal] (9.98,5.83)--(10.62,5.83);
\end{scope}

\foreach \x/\w in {0/2.42,2.97/3.30,6.80/3.08,10.43/3.57}
  \draw[card] (\x,.72) rectangle ++(\w,3.78);
\fill[ovink!6] (.015,3.91) rectangle (2.405,4.48);
\fill[ovblue!9] (2.985,3.91) rectangle (6.255,4.48);
\fill[ovteal!9] (6.815,3.91) rectangle (9.865,4.48);
\fill[ovink!6] (10.445,3.91) rectangle (13.985,4.48);
\node[label,anchor=center] at (1.21,4.22) {01\quad TARGET};
\node[label,anchor=center,text=ovblue] at (4.62,4.22) {02\quad EVOLVE};
\node[label,anchor=center,text=ovteal] at (8.34,4.22) {03\quad VERIFY};
\node[label,anchor=center] at (12.215,4.22) {04\quad DISCOVER};

\node[small] at (.62,3.50) {$n$};
\node[small] at (1.59,3.50) {$d_\ell\;\text{--}\;d_u$};
\draw[ovink!30,line width=.45pt] (.26,3.28)--(2.16,3.28);
\foreach \n/\lo/\hi/\y in {7/5/6/3.05,9/7/8/2.65,11/9/10/2.25}{
  \node[small] at (.62,\y) {$\n$};
  \node[small] at (1.59,\y) {$\lo\;\text{--}\;\hi$};
}
\node[small,align=center,text width=2.04cm,inner sep=0pt] at (1.21,1.40)
  {Reference tables\\Known bounds};

\draw[rounded corners=2pt,fill=ovblue!5,draw=ovblue!60,line width=.7pt]
  (3.17,3.09) rectangle (5.37,3.82);
\node[small,text=ovblue,align=center,text width=2.10cm,inner sep=0pt]
  at (4.27,3.455) {LLM evolves\\a program};
\foreach \row in {0,1,2}
  \foreach \col in {0,...,7}{
    \pgfmathtruncatemacro{\active}{mod(\col-\row+8,8)}
    \ifnum\active<3
      \fill[ovblue!80] (3.63+.16*\col,2.64-.13*\row) rectangle ++(.12,-.09);
    \else
      \fill[ovblue!10] (3.63+.16*\col,2.64-.13*\row) rectangle ++(.12,-.09);
    \fi
  }
\node[small,text=ovblue] at (4.27,2.05) {Generators};
\draw[rounded corners=2pt,fill=ovteal!6,draw=ovteal!60,line width=.7pt]
  (3.17,.96) rectangle (5.37,1.57);
\node[small,text=ovteal] at (4.27,1.265) {Exact evaluator};
\draw[->,>={Latex[length=1.2mm,width=1mm]},line width=.8pt,ovblue]
  (4.27,3.02)--(4.27,2.73);
\draw[->,>={Latex[length=1mm,width=.9mm]},line width=.75pt,ovblue]
  (4.27,1.82)--(4.27,1.64);
\draw[flow,ovblue] (5.39,1.265)--(6.07,1.265)--(6.07,3.455)--(5.40,3.455);
\node[small,rotate=90,inner sep=0pt,text=ovblue] at (5.82,2.37) {feedback};

\fill[ovteal] (7.030,3.590) rectangle ++(.084,-.084);
\fill[ovteal] (7.142,3.590) rectangle ++(.084,-.084);
\fill[ovteal] (7.254,3.590) rectangle ++(.084,-.084);
\fill[ovteal] (7.366,3.590) rectangle ++(.084,-.084);
\fill[ovteal] (7.478,3.590) rectangle ++(.084,-.084);
\fill[ovteal] (7.590,3.590) rectangle ++(.084,-.084);
\fill[ovteal] (7.702,3.590) rectangle ++(.084,-.084);
\fill[ovteal] (7.814,3.590) rectangle ++(.084,-.084);
\fill[ovteal] (7.926,3.590) rectangle ++(.084,-.084);
\fill[ovteal] (8.038,3.590) rectangle ++(.084,-.084);
\fill[ovteal] (8.150,3.590) rectangle ++(.084,-.084);
\fill[ovteal] (7.030,3.478) rectangle ++(.084,-.084);
\fill[ovteal] (7.142,3.478) rectangle ++(.084,-.084);
\fill[ovteal!10] (7.254,3.478) rectangle ++(.084,-.084);
\fill[ovteal] (7.366,3.478) rectangle ++(.084,-.084);
\fill[ovteal] (7.478,3.478) rectangle ++(.084,-.084);
\fill[ovteal] (7.590,3.478) rectangle ++(.084,-.084);
\fill[ovteal] (7.702,3.478) rectangle ++(.084,-.084);
\fill[ovteal!10] (7.814,3.478) rectangle ++(.084,-.084);
\fill[ovteal] (7.926,3.478) rectangle ++(.084,-.084);
\fill[ovteal] (8.038,3.478) rectangle ++(.084,-.084);
\fill[ovteal] (8.150,3.478) rectangle ++(.084,-.084);
\fill[ovteal] (7.030,3.366) rectangle ++(.084,-.084);
\fill[ovteal] (7.142,3.366) rectangle ++(.084,-.084);
\fill[ovteal!10] (7.254,3.366) rectangle ++(.084,-.084);
\fill[ovteal] (7.366,3.366) rectangle ++(.084,-.084);
\fill[ovteal] (7.478,3.366) rectangle ++(.084,-.084);
\fill[ovteal!10] (7.590,3.366) rectangle ++(.084,-.084);
\fill[ovteal] (7.702,3.366) rectangle ++(.084,-.084);
\fill[ovteal] (7.814,3.366) rectangle ++(.084,-.084);
\fill[ovteal] (7.926,3.366) rectangle ++(.084,-.084);
\fill[ovteal] (8.038,3.366) rectangle ++(.084,-.084);
\fill[ovteal] (8.150,3.366) rectangle ++(.084,-.084);
\fill[ovteal!10] (7.030,3.254) rectangle ++(.084,-.084);
\fill[ovteal] (7.142,3.254) rectangle ++(.084,-.084);
\fill[ovteal] (7.254,3.254) rectangle ++(.084,-.084);
\fill[ovteal!10] (7.366,3.254) rectangle ++(.084,-.084);
\fill[ovteal] (7.478,3.254) rectangle ++(.084,-.084);
\fill[ovteal] (7.590,3.254) rectangle ++(.084,-.084);
\fill[ovteal] (7.702,3.254) rectangle ++(.084,-.084);
\fill[ovteal] (7.814,3.254) rectangle ++(.084,-.084);
\fill[ovteal] (7.926,3.254) rectangle ++(.084,-.084);
\fill[ovteal] (8.038,3.254) rectangle ++(.084,-.084);
\fill[ovteal] (8.150,3.254) rectangle ++(.084,-.084);
\fill[ovteal] (7.030,3.142) rectangle ++(.084,-.084);
\fill[ovteal] (7.142,3.142) rectangle ++(.084,-.084);
\fill[ovteal] (7.254,3.142) rectangle ++(.084,-.084);
\fill[ovteal] (7.366,3.142) rectangle ++(.084,-.084);
\fill[ovteal] (7.478,3.142) rectangle ++(.084,-.084);
\fill[ovteal] (7.590,3.142) rectangle ++(.084,-.084);
\fill[ovteal] (7.702,3.142) rectangle ++(.084,-.084);
\fill[ovteal] (7.814,3.142) rectangle ++(.084,-.084);
\fill[ovteal] (7.926,3.142) rectangle ++(.084,-.084);
\fill[ovteal!10] (8.038,3.142) rectangle ++(.084,-.084);
\fill[ovteal] (8.150,3.142) rectangle ++(.084,-.084);
\fill[ovteal] (7.030,3.030) rectangle ++(.084,-.084);
\fill[ovteal] (7.142,3.030) rectangle ++(.084,-.084);
\fill[ovteal!10] (7.254,3.030) rectangle ++(.084,-.084);
\fill[ovteal] (7.366,3.030) rectangle ++(.084,-.084);
\fill[ovteal] (7.478,3.030) rectangle ++(.084,-.084);
\fill[ovteal] (7.590,3.030) rectangle ++(.084,-.084);
\fill[ovteal!10] (7.702,3.030) rectangle ++(.084,-.084);
\fill[ovteal] (7.814,3.030) rectangle ++(.084,-.084);
\fill[ovteal!10] (7.926,3.030) rectangle ++(.084,-.084);
\fill[ovteal!10] (8.038,3.030) rectangle ++(.084,-.084);
\fill[ovteal] (8.150,3.030) rectangle ++(.084,-.084);
\node[inner sep=0pt] at (9.03,3.62)
  {\includegraphics[width=1.38cm]{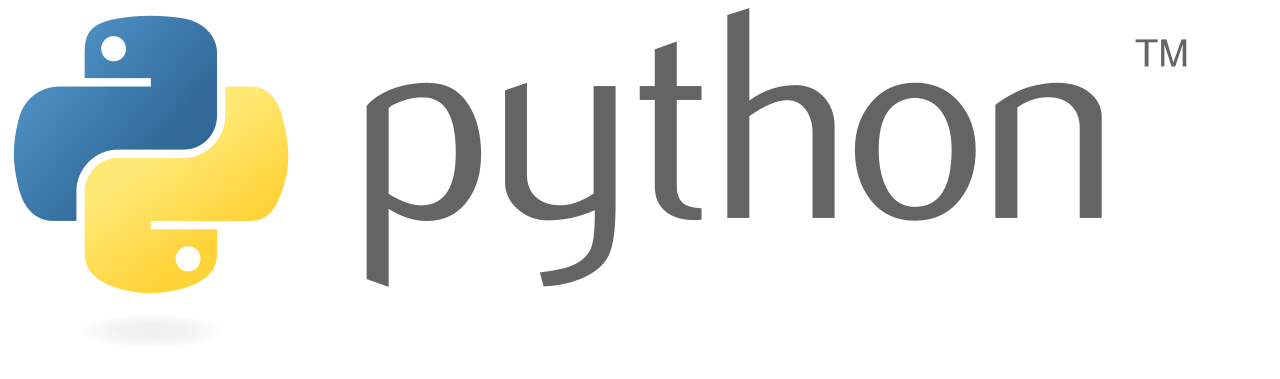}};
\node[inner sep=0pt] at (9.03,3.17)
  {\includegraphics[width=.966cm]{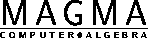}};
\node[small,text=ovteal] at (8.34,2.73) {Recompute $k,c,d$};
\draw[ovink!30,line width=.45pt] (7.00,2.50)--(9.68,2.50);
\draw[ovred!75,line width=.75pt] (7.09,1.60)--(7.09,2.27)--(7.57,2.27)
  --(7.71,2.13)--(7.71,1.60)--cycle;
\draw[ovred!75,line width=.65pt] (7.23,2.04)--(7.53,2.04) (7.23,1.88)--(7.53,1.88);
\node[small,align=left,anchor=west,text width=1.73cm,inner sep=0pt]
  at (7.94,1.99) {SAT + DRAT\\proof check};
\node[small,text=ovred,align=center,text width=2.65cm,inner sep=0pt]
  at (8.34,1.16) {Certify nonexistence};

\node[small,text=ovteal] at (11.13,3.56) {$d_\ell\uparrow$};
\node[small,text=ovred] at (13.25,3.56) {$d_u\downarrow$};
\draw[->,line width=1.1pt,ovteal] (10.92,3.26)--(11.98,3.26);
\draw[->,line width=1.1pt,ovred] (13.49,3.26)--(12.42,3.26);
\draw[ovink,line width=.8pt] (12.20,3.26) circle (.16);
\draw[ovink,line width=.7pt] (12.12,3.26)--(12.18,3.19)--(12.30,3.35);
\node[small] at (12.215,2.87) {Tighten code bounds};
\node[font=\fontsize{10.6}{12}\selectfont] at (12.215,2.38) {$[[11,1,10;8]]_2$};
\node[font=\rmfamily\bfseries\fontsize{15}{17}\selectfont,text=ovteal] at (11.37,1.77) {115};
\node[font=\rmfamily\bfseries\fontsize{15}{17}\selectfont,text=ovred] at (13.02,1.77) {10};
\node[small] at (11.37,1.24) {codes};
\node[small] at (13.02,1.24) {exclusions};

\draw[flow] (2.44,3.37)--(2.94,3.37);
\draw[flow,ovteal] (6.29,3.37)--(6.77,3.37);
\draw[flow,ovteal] (9.90,3.37)--(10.40,3.37);
\draw[flow,ovred] (1.21,.70)--(1.21,.27)--(6.52,.27)--(6.52,1.99)--(6.77,1.99);
\node[small,inner sep=1pt,text=ovred] at (3.80,.51) {nonexistence targets};

\draw[humanflow] (4.53,5.81)--(4.53,4.54);
\node[small,fill=white,inner sep=2pt,text=ovhuman] at (4.53,5.18) {task specification};
\draw[humanflow] (12.20,4.52)--(12.20,5.20)--(8.03,5.20)--(8.03,5.79);
\node[small,fill=white,inner sep=2pt,text=ovhuman] at (10.42,5.20) {inspect codes};
\end{tikzpicture}%

}
\caption{The researcher-in-the-loop discovery pipeline.}
\label{fig:overview}
\end{figure}


\subsection{Target selection}

We compare our constructions with the bounds listed in the EAQECC tables
(\texttt{codetables.de}), accessed on 17 July 2026. These reference bounds
define listed-gap and absent-cell targets, filtered using the EA-Plotkin
and applicable Singleton bounds. EA-Plotkin also tightens 398 upper bounds
in the reference qubit table. Appendix~\ref{app:tables} gives the table
records and filters. The artifact preserves these comparison data and
uses them by default. A second copy, retrieved on 10 September 2026,
contains identical intervals at all 11,640 qubit and qutrit entries.

\subsection{LLM-guided program evolution}

The qubit discovery campaigns use AlphaEvolve~\citep{AlphaEvolve} through
Google Cloud Vertex AI. We provide a seed program, task specification
and exact evaluator; AlphaEvolve generates revisions and archives their
source and outcomes. An LLM modifies a designated block of the seed.
Within each candidate program, an exact parameter oracle evaluates
intermediate matrices; the program uses these results to choose targets,
accept local moves and restart search. Algorithm~\ref{alg:search}, for
example, minimizes a cost combining signature mismatch and the number
of logical operators below the target distance.

After the program returns up to 16 matrices, the external evaluator
recomputes their parameters and assigns the best proposal score to the
program. Its reward favors a valid signature, fewer low-weight logical
operators and newly achieved targets. This reward guides the next program
revision; the inner cost guides matrix search within one execution.
Independent code checking follows in Section~\ref{sec:refute}.

Each candidate runs in its own subprocess with a fixed 240\,s budget
and a specified seed for its random-number generator. Failures receive a low score, and the campaign continues.
The same interface supports fixed-program replay (Section~\ref{sec:repro}).
The four campaign logs record $401$ completed candidate searches,
corresponding to $26.7$\,h of allocated search budget
(Appendix~\ref{app:campaign}).

The initial task specification did not contain the family proved in
Theorem~\ref{thm:unified}. We started with an annealing seed and instructions suggesting cyclic
shifts among other construction directions. Evolved programs produced finite qubit codes; analysis of
their structure led researchers to a general qubit family and its proof.
The later task specifications included that family and requested further
constructions, while the best program was reused as a new seed.
These records distinguish the initial discovery stage from subsequent
search using the mathematical result (Appendix~\ref{app:prompts}).
The current prime-power theorem extends the qubit construction through
further researcher-led generalization and proof.

\subsection{Independent verification and nonexistence checking}\label{sec:refute}

The Magma export supplies only a basis of the symplectic dual
$L=S^\perp$, where $S$ is the stabilizer space. Magma independently
recomputes the dual, radical and $(k,c,d)$ (Appendix~\ref{app:symplectic}).
All 148 objects---115 core codes at $q=2,3,4,5$ and 33 family instances---were re-verified with zero mismatches under Magma V2.28-20.
Separate $\F_2$ linear-algebra checks verify the two solver-refined
qubit codes; all code records can also be checked in Python using the
\artifactlink{}. Lean checks the algebraic certificate underlying
Theorem~\ref{thm:unified}. A separate Comparator run checks the submitted
proof against the exact formal goal and permitted axioms
(Appendix~\ref{app:lean}).

To test existence of an $[[n,k,d;c]]_q$ code, we encode the smaller
subspace $L=S^\perp$ in symplectic normal form. Constraints enforce
the target signature directly, so every satisfying assignment specifies
an EAQECC. Distance constraints dominate the encoding.

For an unsatisfiable encoding, the SAT solver produces a DRAT proof,
a certificate of unsatisfiability checked independently by
DRAT-trim~\citep{DRATtrim}. Together with the encoding's completeness,
this establishes the corresponding code nonexistence result. The qutrit encoding uses a coordinate-normalization
lemma to preserve completeness. Appendix~\ref{app:sat} gives the proof
scope and encoding, and Section~\ref{sec:optimality-results} reports the
certified results.

\subsection{Reproducibility}\label{sec:repro}

Readers can reconstruct two outputs: code constructions and preserved
program executions. Supplied matrices support exact $(k,c,d)$ checks;
family implementations generate instances, and Theorem~\ref{thm:unified}
proves the construction for all admissible parameters. Task specifications
and discovery logs document the progression from finite qubit codes to
the researcher-derived family (Appendix~\ref{app:campaign}). Source replay
concerns retained programs, including later programs that use this family.
It replaces wall time with a virtual clock tied to evaluator calls and
preserves the recorded random-number seeds and stopping conditions.
Three programs with four seeds each reproduce all twelve stored proposal
lists, with agreement between one and four workers. The \artifactlink{}
provides these inputs, independent parameter checks and reference-bound
audits.

\FloatBarrier
\section{Results}\label{sec:results}

Algorithm~\ref{alg:search} presents a later, high-scoring AlphaEvolve
program whose seed already incorporated the discovered qubit family.
It illustrates continued search after that generalization.
Figure~\ref{fig:lineage} and the shading show changes to target selection,
reuse of stored proposals, construction choices and generator-row
mutations. The earlier discoveries that motivated the theorem are
recorded separately (Appendix~\ref{app:campaign}). Section~\ref{sec:ablation}
evaluates iterative feedback using newly generated programs, including
Algorithm~\ref{alg:ablation-search} in Appendix~\ref{app:hitl-program}.

\begin{algorithm}[H]
\caption{Search program evolved by AlphaEvolve.}
\label{alg:search}
\small
\begin{algorithmic}[1]
\Require targets $\mathcal{T}$, time allowance $B$
\Statex $\Phi(r,t)=5000|r.c-c_t|+1000|r.k-k_t|+r.\mathrm{offending}$
\State $best\gets\{\}$; \algdelta{$b_t\gets0$ for all $t\in\mathcal{T}$}
\While{time remains}
  \State $\pi_t\propto\algdelta{$n_t^{-1.5}(1+10b_t)$}$; sample $t\sim\pi$
    \algside{Use recorded progress to change target selection.}
  \State $n\gets n_t$; $s\gets n_t-k_t+c_t$; $u\sim\mathrm{Uniform}[0,1)$
  \If{\algdelta{$u<0.2$, $t\in best$ and $best[t].o>0$}}
    \State $\mathbf{G}\gets\algdelta{$\operatorname{copy}(best[t].\mathbf{G})$}$
      \algside{Resume from this target's stored best proposal.}
  \Else
    \State $\mathbf{G}\gets\algdelta{$\textsc{Construct}(t,u)$}$
      \algside{Expand construction choices.}
  \EndIf
  \If{$\mathbf{G}$ is undefined or $\operatorname{rank}\mathbf{G}\ne s$}
    \State $\mathbf{G}\gets\textsc{RandomStabilizer}(n,s)$
  \EndIf
  \State $o\gets\Phi(\textsc{Evaluate}(n,\mathbf{G},d_t),t)$; $\textsc{KeepBest}(best,t,\mathbf{G},o)$
  \State \textbf{if} $o=0$ \textbf{then continue}
  \State $\tau\gets30$; $f\gets0$
  \While{$f<1500$ \textbf{and} time remains}
    \State $\tau\gets\max(0.1,0.995\tau)$
    \State $\mathbf{G}'\gets\algdelta{$\textsc{MixedMove}(\mathbf{G})$}$
      \algside{Add cyclic rotation of a whole generator row.}
    \State \textbf{if} $\operatorname{rank}\mathbf{G}'\ne s$ \textbf{then} $f\gets f+1$; \textbf{continue}
    \State $o'\gets\Phi(\textsc{Evaluate}(n,\mathbf{G}',d_t),t)$; $\Delta\gets o'-o$
    \State Set $(\mathbf{G},o)\gets(\mathbf{G}',o')$ with probability $\min\{1,\exp(-\Delta/\tau)\}$
    \State $f\gets0$ on an accepted strict improvement, otherwise $f\gets f+1$
    \State $\textsc{KeepBest}(best,t,\mathbf{G},o)$; \textbf{if} $o=0$ \textbf{then break}
  \EndWhile
  \State \algdelta{$b_t\gets\textsc{ProgressBonus}(best,t,b_t)$}
\EndWhile
\State \Return up to 16 stored proposals with the smallest objective values
\end{algorithmic}
\vspace{2pt}
{\footnotesize\algdelta{Shading}: principal changes from the campaign seed; routine definitions are in Appendix~\ref{app:construction-program}.}
\end{algorithm}


\begin{figure}[tb]
\centering
\resizebox{\linewidth}{!}{%
\begin{tikzpicture}[x=1cm,y=1cm,>=stealth,
  font=\rmfamily\fontsize{8.2}{9.8}\selectfont,
  heading/.style={font=\rmfamily\bfseries\fontsize{9}{10.5}\selectfont},
  rowlabel/.style={anchor=west,text width=2cm,font=\rmfamily\bfseries\fontsize{8.2}{9.8}\selectfont},
  small/.style={font=\rmfamily\fontsize{7.4}{8.6}\selectfont},
  box/.style={draw=black!28,rounded corners=2pt,fill=white,minimum height=.45cm,inner sep=4pt},
  arr/.style={->,draw=black!55,line width=.65pt}]
\path[use as bounding box] (0,-4.90) rectangle (15.8,.50);
\fill[black!5,rounded corners=3pt] (2.25,-4.85) rectangle (7.40,.48);
\fill[achieve!8,rounded corners=3pt] (7.65,-4.85) rectangle (15.8,.48);
\node[heading] at (4.82,.22) {Campaign seed};
\node[heading,text=achieve] at (11.72,.22) {Evolved program (Algorithm~\ref{alg:search})};
\draw[black!16] (2.4,-.05)--(7.25,-.05);
\draw[achieve!35] (7.8,-.05)--(15.65,-.05);
\node[rowlabel] at (0,-.78) {Target\\selection};
\node at (4.82,-.58) {$\pi_t\propto n_t^{-2}$};
\node[small] at (4.82,-1.08) {fixed length-based weights};
\node at (10.72,-.58) {$\pi_t\propto n_t^{-1.5}(1+10b_t)$};
\node[box,small,text=achieve,draw=achieve!50] (score) at (14.40,-.63) {stored cost};
\node[small,text=achieve] (bonus) at (12.65,-1.12) {near-miss bonus $b_t$};
\draw[arr,draw=achieve] (score.south) |- (bonus.east);
\draw[arr,draw=achieve] (bonus.west) -| (10.8,-.81);
\draw[black!15] (2.4,-1.47)--(7.25,-1.47);
\draw[achieve!25] (7.8,-1.47)--(15.65,-1.47);
\node[rowlabel] at (0,-2.20) {Search\\initialization};
\node[box] (fresh) at (3.85,-2.05) {fresh candidate};
\node[box] (repair) at (6.25,-2.05) {repair};
\draw[arr] (fresh)--(repair);
\node[small] at (4.82,-2.63) {new initialization on each attempt};
\node[box,draw=achieve!55] (saved) at (9.15,-2.05) {saved $best[t]$};
\node[box,draw=achieve!55] (resume) at (11.95,-2.05) {resume repair};
\node[box,draw=achieve!55] (update) at (14.55,-2.05) {update $best[t]$};
\draw[arr,draw=achieve] (saved)--(resume);
\draw[arr,draw=achieve] (resume)--(update);
\draw[arr,draw=achieve] (update.south) -- (14.55,-2.70) -- (9.15,-2.70) -- (saved.south);
\node[small,fill=achieve!8,inner sep=2pt,text=achieve] at (11.85,-2.70) {conditional reuse on a later attempt};
\draw[black!15] (2.4,-3.03)--(7.25,-3.03);
\draw[achieve!25] (7.8,-3.03)--(15.65,-3.03);
\node[rowlabel] at (0,-3.94) {Generator\\mutation};
\foreach \i/\p in {0/X,1/I,2/Z,3/Y,4/I,5/X}{
 \draw[fill=white,draw=black!20] (3.47+.45*\i,-3.62) rectangle ++(.43,.38);
 \node[small] at (3.685+.45*\i,-3.43) {$\mathrm{\p}$};}
\foreach \i/\p in {0/X,1/I,2/Z,3/Y,4/Z,5/X}{
 \draw[fill=white,draw=black!20] (3.47+.45*\i,-4.29) rectangle ++(.43,.38);
 \ifnum\i=4\fill[black!22] (3.47+.45*\i,-4.29) rectangle ++(.43,.38);\fi
 \node[small] at (3.685+.45*\i,-4.10) {$\mathrm{\p}$};}
\draw[arr] (4.82,-3.67)--(4.82,-3.86);
\node[small] at (4.82,-4.61) {local Pauli update (example)};
\foreach \i/\p in {0/X,1/I,2/Z,3/Y,4/I,5/X}{
 \draw[fill=white,draw=achieve!30] (9.50+.45*\i,-3.62) rectangle ++(.43,.38);
 \ifnum\i>3\fill[achieve!25] (9.50+.45*\i,-3.62) rectangle ++(.43,.38);\fi
 \node[small] at (9.715+.45*\i,-3.43) {$\mathrm{\p}$};}
\foreach \i/\p in {0/I,1/X,2/X,3/I,4/Z,5/Y}{
 \draw[fill=white,draw=achieve!30] (9.50+.45*\i,-4.29) rectangle ++(.43,.38);
 \ifnum\i<2\fill[achieve!25] (9.50+.45*\i,-4.29) rectangle ++(.43,.38);\fi
 \node[small] at (9.715+.45*\i,-4.10) {$\mathrm{\p}$};}
\draw[arr,draw=achieve] (10.85,-3.67)--(10.85,-3.86);
\node[small,align=left,text=achieve] at (14.00,-3.80) {rotate one row\\by whole qubits};
\node[small] at (11.72,-4.61) {row rotation added to the existing moves};
\end{tikzpicture}}
\caption{Program changes in Algorithm~\ref{alg:search} (schematic).}
\label{fig:lineage}
\end{figure}


\subsection{Distance improvements}\label{sec:distance-results}

For listed cells, the tables give the July 2026 distance interval within
the previous code parameters and our verified distance in Ours $d$
(2026). Bold entries attain the applicable archived, EA-Plotkin or
certified upper bound (Section~\ref{sec:optimality-results}). References
identify previous lower-bound constructions and their dates. For
$q=4,5$, where no comparable dated entry is archived, full constructed
parameters appear in a separate \emph{New constructions} block.
No historical interval is assigned where no comparison entry exists.

\begin{table}[!ht]
\centering
\setlength{\tabcolsep}{3pt}
\renewcommand{\arraystretch}{1.18}
\begin{minipage}[t]{0.485\linewidth}
\vspace{0pt}
\centering
\footnotesize
\caption{Qubit EAQECCs ($q=2$).}
\label{tab:results}
\begin{tabular}{@{}lcc@{}}
\toprule
Previous & Ours $d$ & \multirow{2}{*}{References} \\
$[[n,k,d_\ell\text{--}d_u;c]]_q$ & (2026) & \\
\midrule
$[[7,1,5\text{--}6;4]]_2$ & $\mathbf{6}$ & \citet{GHW2020} \\
$[[9,1,7\text{--}8;6]]_2$ & $\mathbf{8}$ & \citet{Luo2022} \\
$[[10,1,7\text{--}9;6]]_2$ & $\mathbf{9}$ & \citet{Luo2022} \\
$[[11,1,9\text{--}10;8]]_2$ & $\mathbf{10}$ & \citet{Luo2022} \\
$[[12,1,9\text{--}11;8]]_2$ & $\mathbf{11}$ & \citet{Luo2022} \\
$[[12,2,7\text{--}10;9]]_2$ & $\mathbf{9}$ & \citet{Luo2022} \\
$[[13,1,9\text{--}12;8]]_2$ & $\mathbf{11}$ & \citet{Luo2022} \\
$[[13,1,11\text{--}12;10]]_2$ & $\mathbf{12}^{\dagger}$ & \citet{Luo2022} \\
$[[15,1,13\text{--}14;12]]_2$ & $\mathbf{14}^{\dagger}$ & \citet{Luo2022} \\
\bottomrule
\end{tabular}
\par\vspace{2pt}
{\scriptsize $\dagger$ Constructed by the evolved program at unseen code lengths.}
\end{minipage}\hfill
\begin{minipage}[t]{0.485\linewidth}
\vspace{0pt}
\centering
\footnotesize
\caption{Qudit EAQECCs ($q=3,4,5$).}
\label{tab:results-qudit}
\begin{tabular}{@{}lcc@{}}
\toprule
Previous & Ours $d$ & \multirow{2}{*}{References} \\
$[[n,k,d_\ell\text{--}d_u;c]]_q$ & (2026) & \\
\midrule
$[[9,1,7\text{--}8;4]]_3$ & $\mathbf{8}$ & \citet{Luo2022} \\
$[[10,1,8\text{--}9;5]]_3$ & $\mathbf{9}$ & \citet{Luo2022} \\
$[[11,1,9\text{--}10;7]]_3$ & $\mathbf{10}$ & \citet{Luo2022} \\
$[[13,1,10\text{--}12;8]]_3$ & $\mathbf{12}$ & \citet{Luo2022} \\
\midrule
\emph{New construction} & Ours $d$ & References \\
\midrule
$[[10,1,9;4]]_4$ & $9$ & Ours \\
$[[12,1,11;6]]_4$ & $11$ & Ours \\
$[[14,1,13;8]]_4$ & $13$ & Ours \\
\midrule
$[[12,1,11;5]]_5$ & $11$ & Ours \\
$[[14,1,13;7]]_5$ & $13$ & Ours \\
\bottomrule
\end{tabular}
\end{minipage}
\end{table}


The five Table~\ref{tab:results} rows with $k=1$ and $c=n-3$ are
covered by Theorem~\ref{thm:unified}; the other four retained generator
matrices were constructed by normal-form SAT. The daggered $n=13,15$
rows also have constructions from the evolved program at lengths withheld
from training and selection (Section~\ref{sec:longer-codes}). These
reconstructions are covered by the family theorem and are not counted
again among its 54 gap closures.

Our codes and the upper bounds close all nine listed qubit gaps in
Table~\ref{tab:results}. Four cells improve the previous lower bound by
two: $(n,k,c)=(10,1,6)$, $(12,1,8)$, $(12,2,9)$ and $(13,1,8)$.
For the last two cells, the EA-Plotkin bound and a checked DRAT refutation,
respectively, give final distance intervals $[9,9]$ and $[11,11]$
(Section~\ref{sec:optimality-results}). Table~\ref{tab:results-qudit}
shows four qutrit gap closures, including $[[13,1,12;8]]_3$, which raises
the listed distance from 10 to 12 at fixed length and entanglement.

The previous $[[7,1,5;4]]_2$ construction follows from the pure
$[[11,1,5]]_2$ code of \citet{CRSS1997} and the length-reduction rule of
\citet{GHW2020}; we date this EAQECC construction by the 2020 rule.
The other eight qubit baselines and the four displayed qutrit baselines
follow from the 2022 version of \citet{Luo2022}. Appendix~\ref{app:bound-history}
records the source codes and propagation rules for each cell.

The 65 core qubit code records cover 61 distinct $(n,k,c)$ cells: eight listed
and 53 absent cells. The 27 qutrit codes comprise 15 improvements
to listed entries, 11 constructions at absent cells and one code whose
distance is already listed. Multiple codes at one cell are not
counted as separate table entries. Four of these listed core cells overlap the
family results in Section~\ref{sec:family-results}. The additional
$n=15$ code record is kept in the controlled-study archive, separately from
these core-file counts.

\subsection{Code families and reduced entanglement}\label{sec:family-results}

Theorem~\ref{thm:unified} is the mathematical generalization of the
construction identified through the evolved qubit codes. It gives
$[[n,1,n-1;n-q-1]]_q$ for $q\ge3$, $n\ge2q+1$, and for $q=2$
at odd $n\ge5$. The proof in Appendix~\ref{app:theorem} establishes
the entire family; the implementations verify finite instances at
$q=2,3,4,5$. Entanglement lifting increases $c$ by one at fixed $(n,k)$
without reducing distance when $c<n-k$ (Lemma~\ref{app:lifting}).
The qubit family closes 29 listed gaps. For $q=3$, lifting from $c=n-4$
to the table's $c=n-3$ closes 25 more, for 54 in total.
This family addresses finite-length distance--entanglement tradeoffs:
at fixed $q$, its rate $k/n=1/n$ vanishes and $c/n$ tends to one.
The broader archive includes 29 binary codes with $k>1$, up to $k=6$.

Additional constructions attain $d=n-1$ at $c=n-q-2$.
The displayed $q=4,5$ examples include $[[10,1,9;4]]_4$ and
$[[12,1,11;5]]_5$, requiring four and five shared pairs, respectively.
These constructions establish achievable
entanglement costs; smaller costs at $q=4,5$ have not been excluded.
Complementary finite-field annealing searches produce 13 archived codes
at $q=4$ and 10 at $q=5$ (Appendix~\ref{app:nonbinary-search}).
Table~\ref{tab:results-qudit} selects those using the least entanglement
reached at the displayed lengths.

\begin{figure}[!ht]
\centering
\resizebox{0.80\linewidth}{!}{%
\endgroup
\caption{Linear stabilizer EAQECCs at $k=1$, target distance $n-1$.}
\label{fig:map}
\end{figure}


Figure~\ref{fig:map} concerns linear stabilizer EAQECCs with $k=1$
and target distance $n-1$. Teal at $(q,n,c)$ denotes a construction
with distance at least $n-1$. The white tick, solid fill and
pale fill distinguish $c=n-q-1$, smaller $c$ and larger $c$, respectively;
these are positions relative to the family line, not claims of minimum
entanglement. Pale cells correspond to explicit constructions or entanglement
lifting. Red cells are excluded by released, checked certificates or
by their implications at smaller $c$ through entanglement lifting. A dotted
outline records an unsuccessful search, not nonexistence. Unmarked gray
cells assert neither construction nor exclusion. The dashed $c=n-3$
line is obtained by substituting $k=1$ and $d=n-1$ into the historical
Brun--Devetak--Hsieh expression $2(d-1)\le n-k+c$ at equality.
It is a comparison line, not a general bound in this regime
(Appendix~\ref{app:tables}).

\subsection{Upper bounds and optimality}\label{sec:optimality-results}

Solver refinement settles the two remaining qubit gaps from
Table~\ref{tab:results}. Normal-form SAT finds $[[12,2,9;9]]_2$,
improving upon an intermediate $d=8$ construction from our earlier search. EA-Plotkin gives
$d\le\lfloor144/15\rfloor=9$, so the new code is distance-optimal.
At $(n,k,c)=(13,1,8)$, a DRAT-certified refutation excludes $d\ge12$;
our existing $d=11$ code is therefore optimal. Both codes are
checked by separate binary linear-algebra code, and DRAT-trim replays
the nonexistence certificate. We attribute these two closures to solver refinement.

Matching nonexistence results also establish minimum entanglement at
selected lengths. For qubits at $k=1$, $d=n-1$, the minimum is $n-3$
at $n=7,9,11$ and $n-4$ at $n=6,8,10$. This finite-length parity
pattern combines distinct constructions with certified exclusions;
Theorem~\ref{thm:unified}'s binary branch covers odd lengths.
For qutrits, the certified
exclusion of $[[10,1,9;4]]_3$ and a code at $c=5$ give
$\cmins(10,1,9)=5$. Two further exclusions, $[[5,2,4;3]]_2$ and $[[6,2,5;4]]_2$,
were also checked by exhaustive enumeration. All ten exclusions have
checked DRAT proofs (Appendix~\ref{app:sat-resources}).

Applying the EA-Plotkin bound of \citet{GuoLi2013} additionally lowers
398 listed qubit upper bounds. The auditor recomputes these corrections
against the reference table data. These corrections follow from a known
inequality and are reported as a separate source of upper bounds.

\FloatBarrier
\section{Ablation Study}\label{sec:ablation}

We test whether iterative feedback helps develop a researcher-proposed
change into an effective search program. Both policies receive the same
initial code and research guidance; their difference is how they reuse
previous programs and execution feedback.

\subsection{Program generation}
The shared proposal is to maintain and mutate a basis of
$L=S^\perp$, converting it to $S$ for exact evaluation. For
$[[9,1,8;6]]_2$ and $[[11,1,10;8]]_2$, this replaces a basis of 14 or
18 generators with four vectors. Symplectic complementation preserves
the feasible code set but changes the variables and local mutations.
A preliminary calibration compares the initial $S$-side program with
a working $L$-side control and fixes a budget at which the change can
produce measurable success. Neither the control's source nor generator
matrices are supplied to the model.

\emph{Independent proposals} always receive the initial program and its
fixed training measurements. \emph{Iterative evolution} receives the
best-so-far program and the most recent attempt, including rejected
attempts, with training feedback for both. The model can revise the
representation, objective, mutation operators and restart policy.
A proposal is a whole program revision; an evaluator call checks one
candidate matrix produced by that program. Both use Gemini 3.5 Flash
and identical tool access. There is no
controller initialization: every success must be produced by the
submitted program.

An initial four-block study was followed by a prospective eight-block
replication using new generation and execution seeds. Each policy is
allocated seven proposals per block. The replication receives 111 of
112 planned responses; one iterative pool has six proposals. All eight
blocks are retained under a rule fixed before validation and testing.
Training uses two seeds and 10,000 exact evaluations per target. Four
new validation seeds select among the three training-best distinct
programs; 32 further seeds evaluate each selection. Only training
feedback enters generation. The original and new cohorts are reported
separately (Appendix~\ref{app:ablations}).

\begin{figure}[!ht]
\centering
\resizebox{\linewidth}{!}{
\definecolor{rpinitial}{HTML}{A37545}
\definecolor{rpind}{HTML}{78818E}
\definecolor{rpcontrol}{HTML}{B55D32}
\definecolor{rpevo}{HTML}{00858B}


}
\caption{Prospective replication under shared research guidance.}
\label{fig:ablation-comparison}
\end{figure}


In the new cohort, success at $n=9,11$ is 72.7\% and 43.4\% for
iterative evolution, versus 60.2\% and 27.0\% for independent proposals.
The prespecified primary endpoint is the paired block difference in
mean normalized success-curve area: 0.359 versus 0.260, a gain of 0.099.
Five of eight block differences are positive; a descriptive two-sided
sign-flip test gives $p=0.328$. Generation blocks, rather than execution
seeds, are the units for policy inference. Restricting to the seven
fully received paired blocks gives a gain of 0.150.

Figure~\ref{fig:ablation-comparison} aggregates both lengths within each
panel. Each policy contributes eight selected programs and 32 runs per
length; each fixed control contributes 32 runs per length. Across the
two generation lengths, iterative selections solve 297/512 executions,
versus 223/512 for independent selections. Every saved successful code
passes an independent parameter check. Appendix~\ref{app:hitl-outcomes}
reports the earlier cohort and its fresh-seed repeat separately; the
artifact also preserves model-use accounting for both policies.

\subsection{Generalization to longer codes}\label{sec:longer-codes}
The selected programs are applied without further modification to
$[[13,1,12;10]]_2$ and $[[15,1,14;12]]_2$ at 30,000 calls.
These lengths are withheld from generation and selection; their outcomes
are secondary endpoints. In the new eight-block cohort, iterative
success is 118/256 (46.1\%) and 81/256 (31.6\%), versus 50/256
(19.5\%) and 8/256 (3.1\%) for independent proposals. Two iterative
selections achieve 32/32 at $n=13$ and at least 28/32 at $n=15$
(Appendix~\ref{app:replication}).
Their mean normalized curve areas are 0.309 and 0.193, compared with
0.099 and 0.012 for independent proposals.

A preserved lineage from the original cohort gives a source-level
diagnosis of one useful edit. Adding a penalty for distance below the
target---the only functional search change in that step---raises
$n=13$ success from 13/32 to 29/32 and $n=15$ success from 3/32 to
28/32. The validation-selected child (Algorithm~\ref{alg:ablation-search})
then bundles changes to objective weights, moves and restart behavior;
we therefore use the earlier step for edit-level attribution and report
the complete lineage in Appendix~\ref{app:hitl-program}. All resulting
codes pass independent linear-algebra checks. Table~\ref{tab:results}
includes the longer-code parameters alongside their prior intervals.

\FloatBarrier
\section{Limitations and conclusion}\label{sec:limitations}

The controlled evidence is scoped to one model, one researcher proposal
and binary targets. The eight generation blocks are the independent
units for policy inference; repeated executions measure frozen-program
variability. The policies receive equal proposal allocations but use
different token totals. Exact distance verification
enumerates symplectic duals, so its cost grows exponentially with their
dimension. The distance guarantees assume ideal decoding and noiseless
receiver-side halves; circuit costs and hardware-level error rates
are not evaluated. For these controlled studies, the artifact preserves
all generated sources, requests and selection records and supports fixed-program replay;
stochastic model sampling is documented rather than expected to return
byte-identical programs.

The central mathematical result developed from code constructions
produced by evolved programs. Researchers analyzed their structure,
derived a qubit family, and extended and proved the prime-power
construction of Theorem~\ref{thm:unified}. Later prompts reused the
qubit family, showing how a result obtained through program search can
guide further research. The family and entanglement lifting close 54
listed gaps. Normal-form SAT completes two final qubit closures, ten
checked DRAT refutations certify selected upper bounds and minimum
entanglement costs, and EA-Plotkin tightens 398 listed upper bounds.

Supplied matrices, family implementations and independent verifiers
support reconstruction and checks of finite instances; preserved programs
support seeded replay. The proof covers every admissible parameter.
Independent Magma verification reports zero mismatches across 148
objects, and Lean checks the algebraic certificate underlying the
researcher-derived theorem. The prospective eight-block cohort identifies
useful program revisions and higher held-out longer-code success among
iterative selections under shared research guidance. Applying the
workflow to another problem still requires its own representation,
evaluator and mathematical analysis.

\subsection*{AI use statement}
We used Gemini models through AlphaEvolve and a local controller to
propose and revise search programs under researcher guidance and exact
evaluation feedback (Sections~\ref{sec:pipeline} and~\ref{sec:ablation}).
LLM-based assistants, including Codex, supported implementation,
mathematical reasoning and manuscript preparation. Generated codes were
independently checked; Theorem~\ref{thm:unified}'s algebraic construction
was verified in Lean. The authors reviewed AI-assisted outputs and take
responsibility for the paper's claims, proofs, code and reported results.

\subsection*{Ethics statement}
This work studies algebraic code constructions and search programs;
it does not involve experiments on human participants or use
participant-level or personal datasets. Its potential benefit is to
improve methods for quantum error correction and communication, while
the reported idealized parameters do not establish hardware-level
reliability or operational security. Research-integrity considerations
are addressed through dated sources, explicit provenance, independent
verification, and separation of new parameter claims from retrospective
reconstruction. Unsuccessful searches are distinguished from proved
nonexistence. The controlled-study records preserve failed and excluded
attempts, unavailable responses and protocol amendments, and the released
records omit account credentials and machine-local configuration.
Researcher and AI contributions are disclosed, external sources are
cited, and the artifact retains software licensing information. Compute
allowances and model usage are documented in Appendices~\ref{app:campaign}
and~\ref{app:ablations}.

\subsection*{Reproducibility statement}
The artifact supports reconstruction and independent checking of the
mathematical outputs: generator matrices specify the finite codes,
implementations generate family instances, and Appendix~\ref{app:theorem}
provides the general construction and proof of Theorem~\ref{thm:unified}.
The initial and later task specifications in Appendix~\ref{app:prompts}
document the transition from initial search directions to reuse of
the researcher-derived qubit family. Preserved programs can be replayed with fixed
seeds and evaluation budgets as described in Section~\ref{sec:repro}
and the \artifactlink{}. Appendix~\ref{app:ablations} and the accompanying
study records supply generation requests, selections and held-out
outcomes for the separate controlled comparison. Computational checks
verify finite cases; general validity rests on the proof.
Appendix~\ref{app:lean} gives the exact formal goal and proof structure
for Theorem~\ref{thm:unified}. The artifact includes the Lean sources,
pinned dependencies and separate kernel and Comparator verification records.
Fresh LLM sampling is not part of this replay. External solver
and Magma checks are reported separately from completed Python checks.
The artifact is released under the MIT license and is available at
\url{https://github.com/LeGenAI/llm_guided_EAQECC_search/tree/arxiv-v1}.

\subsection*{Acknowledgments}
We thank the maintainers of the tables of best-known
codes~\citep{Codetables}.
J.-L.~Kim was supported in part by the BK21 FOUR (Fostering Outstanding
Universities for Research) funded by the Ministry of Education (MOE, Korea),
National Research Foundation of Korea (NRF) under Grant
No.~4120240415042, Basic Science Research Program through the National
Research Foundation of Korea (NRF) funded by the Ministry of Science and ICT
under Grant No.~RS-2025-24534992 and Global - Learning \& Academic research
institution for Master's$\cdot$PhD students, and Postdocs (LAMP) Program of
the National Research Foundation of Korea (NRF) grant funded by the Ministry
of Education (No.~RS-2024-00441954).


\clearpage
\bibliographystyle{conference}
\bibliography{refs}

\appendix
\raggedbottom
\section{The domain, the tables, and the targets}\label{app:domain}

\subsection{Codes as subspaces of $\F_q^{2n}$}\label{app:symplectic}
For prime-power $q$, our search uses $\F_q$-linear stabilizer EAQECCs.
Stabilizer generators are products of Pauli operators defining
error-detection constraints. Entanglement assistance permits noncommuting
generators on the transmitted qudits by completing them with operators
on the receiver's halves into commuting constraints. Their commutation
relations determine the entanglement cost~\citep{BrunDevetakHsieh2006,WildeBrun2008}.
Nonadditive EAQECCs form a different search space~\citep{Shin2011}.

We write vectors in bold, scalar coordinates and coefficients in italic,
and subspaces with italic capitals such as $S,L,R$.
Modulo phase, a Pauli operator is represented by
$\vct{u}=(x_1,z_1,\ldots,x_n,z_n)\in\F_q^{2n}$. For qubits, the pairs
$(0,0)$, $(1,0)$, $(0,1)$ and $(1,1)$ represent $I,X,Z,Y$, respectively. The
symplectic product with $\vct{v}=(x'_1,z'_1,\ldots,x'_n,z'_n)$ is
\[
 \langle \vct{u},\vct{v}\rangle=\sum_i(x_i z_i'-z_i x_i'),
\]
and the symplectic weight $\wt(\vct{u})$ counts positions with
$(x_i,z_i)\ne(0,0)$. For qubits, zero symplectic product means that
the Pauli operators commute.

Let $S$ be the generator span and set $s=\dim S$. Its symplectic dual
$S^\perp$ consists of vectors orthogonal to $S$.
For any subspace $U\le\F_q^{2n}$, define its radical by
$\operatorname{rad}(U):=U\cap U^\perp$. Thus
$S_{\mathrm{iso}}=\operatorname{rad}(S)$ contains stabilizer
operations with trivial logical action. Vectors in
$S^\perp\setminus S_{\mathrm{iso}}$ represent undetectable errors
with nontrivial logical action. The code parameters are
\[
  c=\frac{s-\dim S_{\mathrm{iso}}}{2},\qquad
  k = n - s + c, \qquad
  d = \min\{\wt(\vct{v}) : \vct{v} \in S^{\perp}\setminus S_{\mathrm{iso}}\}.
\]
Finite-field linear algebra gives the dual and radical. With
$r=\dim L=n+k-c$, the dual has $q^r$ vectors, of which
$q^r-q^{n-k-c}$ are non-radical; their minimum weight gives $d$.
For Theorem~\ref{thm:unified}, $r=q+2$ is independent of length at fixed $q$.
At every binary ablation length, $r=4$: 16 dual vectors and 12
non-radical vectors. The controlled evaluator computes the radical
from the Gram kernel, avoiding enumeration of $S$. The archived
discovery helper instead enumerates both $S$ and $L$, totaling
$2^s+2^r$ vectors before filtering. Thus implementation and subspace
dimensions determine cost, independently of the search representation.
Parameters are recomputed from generators, not search-program reports.

To propagate constructions and exclusions across entanglement costs,
we use the following form of the standard rule in
\citet[Section IV, rule (4)]{Luo2022}. We include its symplectic argument
for completeness.

\begin{lemma}\label{app:lifting}
An $\F_q$-linear stabilizer EAQECC with parameters $[[n,k,d;c]]_q$,
$k\ge1$ and $c<n-k$, gives an $\F_q$-linear stabilizer EAQECC with
parameters $[[n,k,d';c+1]]_q$ and $d'\ge d$.
\end{lemma}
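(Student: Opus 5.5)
We work in the symplectic picture of Appendix~\ref{app:symplectic} and prove the lemma by enlarging the stabilizer space $S$ by one suitable vector. Let $s=\dim S$, $S_{\mathrm{iso}}=\operatorname{rad}(S)$ with $\dim S_{\mathrm{iso}}=s-2c$, and $L=S^\perp$ with $\dim L=2n-s=n+k-c$. A routine check gives $\operatorname{rad}(L)=L\cap L^\perp=S^\perp\cap S=S_{\mathrm{iso}}$. The quotient $L/S_{\mathrm{iso}}$ therefore carries a nondegenerate induced symplectic form, and its dimension is $(n+k-c)-(s-2c)=2k$. Since $k\ge1$, we may pick $\vct{v}\in L\setminus S_{\mathrm{iso}}$, and we set $S'=S+\langle\vct{v}\rangle$. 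Then $s'=s+1$, and the whole argument reduces to computing $\operatorname{rad}(S')$ and $S'^\perp$.

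\textbf{Radical of $S'$.} We have $S'^\perp=S^\perp\cap\vct{v}^\perp=L\cap\vct{v}^\perp$, which is a hyperplane of $L$. It is a proper hyperplane because the form on $L/S_{\mathrm{iso}}$ is nondegenerate and $\vct{v}\notin S_{\mathrm{iso}}$. Write a general element of $S'$ as $\vct{x}=\vct{s}+a\vct{v}$ with $\vct{s}\in S$. Then $\vct{x}\in S'^\perp$ forces $\vct{x}\in L$, hence $\vct{s}\in L\cap S=S_{\mathrm{iso}}$. Every $\vct{s}+a\vct{v}$ with $\vct{s}\in S_{\mathrm{iso}}$ is automatically orthogonal to $\vct{v}$, since $\vct{v}\in L=S^\perp$ and $\langle\vct{v},\vct{v}\rangle=0$. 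It follows that $\operatorname{rad}(S')=S_{\mathrm{iso}}+\langle\vct{v}\rangle$, of dimension $s-2c+1$. This gives
\[
c'=\frac{(s+1)-(s-2c+1)}{2}=c,
\]
so this choice of $\vct{v}$ changes $k$ rather than $c$. The naive choice therefore fails, and we must instead choose $\vct{v}\notin L$.

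\textbf{Corrected construction.} Choose $\vct{v}\notin S+L$ if possible, or more generally $\vct{v}\notin L$, so that $\vct{v}$ is not orthogonal to all of $S$. We also want $\vct{v}$ to be orthogonal to $S_{\mathrm{iso}}$, so that $\vct{v}\in S_{\mathrm{iso}}^\perp=S+L$. We then take $\vct{v}\in(S+L)\setminus(S\cup L)$, adjusted so that $S'^\perp\subseteq L$ keeps the distance. Concretely, decompose $S+L=S_{\mathrm{iso}}\oplus H_S\oplus H_L$, where $H_S$ and $H_L$ are mutually orthogonal symplectic complements of dimensions $2c$ and $2k$. Here $\dim(S+L)=2n-(s-2c)$, and $2n-s+2c=n+k+c$ must equal $(s-2c)+2c+2k$; this identity follows from $k=n-s+c$.

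This decomposition leaves no room to enlarge $S$ while keeping $S'^\perp\subseteq L$ and raising $c$. So we pass to the full space: since $c<n-k$ we have $\dim S_{\mathrm{iso}}=n-k-c>0$. Pick $\vct{w}\in S_{\mathrm{iso}}$ nonzero and a partner $\vct{v}\notin S+L$ with $\langle\vct{w},\vct{v}\rangle=1$ and $\vct{v}\perp(H_S\oplus H_L)$ and $\vct{v}\perp$ the rest of $S_{\mathrm{iso}}$. Such a $\vct{v}$ exists by nondegeneracy of $\F_q^{2n}$.

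\textbf{Parameters of $S'=S+\langle\vct{v}\rangle$.} The radical drops by $\vct{w}$ and gains nothing, so $\dim\operatorname{rad}(S')=s-2c-1$. Hence
\[
c'=\frac{(s+1)-(s-2c-1)}{2}=c+1,\qquad k'=n-(s+1)+(c+1)=k.
\]
For the distance, $S'^\perp=L\cap\vct{v}^\perp\subseteq L$ and $\operatorname{rad}(S')\subseteq S_{\mathrm{iso}}$. Any $\vct{u}\in S'^\perp\setminus\operatorname{rad}(S')$ either lies outside $S_{\mathrm{iso}}$, giving $\wt(\vct{u})\ge d$, or lies in $S_{\mathrm{iso}}\setminus\operatorname{rad}(S')$. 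The latter case is impossible, because such a $\vct{u}$ has a nonzero $\vct{w}$-component and so $\langle\vct{u},\vct{v}\rangle\ne0$.

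\textbf{Main obstacle.} The delicate step is constructing $\vct{v}$ with exactly these orthogonality relations. We plan to obtain it from a symplectic basis of $\F_q^{2n}$ adapted to the chain $S_{\mathrm{iso}}\subseteq S+L$, that is, by the standard symplectic normal form (Witt extension), taking $\vct{v}$ to be the hyperbolic partner of $\vct{w}$.
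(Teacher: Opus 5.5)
Your final construction is correct and is the paper's argument seen from the stabilizer side. For your $\vct v$, $S'^\perp=L\cap\vct v^\perp=H_L\oplus(S_{\mathrm{iso}}\cap\vct v^\perp)$ is exactly the paper's $L'=U\oplus R'$, with $U=H_L$ and $R'=S_{\mathrm{iso}}\cap\vct v^\perp$, and your distance step is the paper's inclusion $L'\setminus R'\subseteq L\setminus R$.

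In a write-up, delete the false starts: the request $\vct v\perp S_{\mathrm{iso}}$ is precisely what must fail. The only condition you need is $\vct v\notin S+L=S_{\mathrm{iso}}^\perp$; such a $\vct v$ exists trivially by nondegeneracy because $S_{\mathrm{iso}}\neq0$, so your ``main obstacle'' is not an obstacle. Working directly with a hyperplane of $L$, as the paper does, avoids constructing $\vct v$ at all.
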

\begin{proof}[Sketch of proof]
Set $L=S^\perp$ and $R=\operatorname{rad}(L)$, with $\dim R=j=n-k-c>0$.
Choose $L=U\oplus R$. The form on $U$ is nondegenerate, since a
vector of $U$ orthogonal to $U$ also lies in $R$, hence in $U\cap R=\{0\}$; hence $\dim U=2k$.
For a codimension-one subspace $R'\le R$, put $L'=U\oplus R'$ and
$S'=(L')^\perp$. Then $\operatorname{rad}(L')=R'$ and
\[
 L'\setminus R'\subseteq L\setminus R,
 \qquad \dim S'=s+1,\qquad \dim R'=j-1.
\]
The inclusion gives $d'\ge d$, while the parameter formulas give
$c'=((s+1)-(j-1))/2=c+1$ and $k'=n-(s+1)+c'=k$.
The ambient space is unchanged, so the length remains $n$.
\end{proof}
Repeating the operation gives distance at least $d$ for every
$c'\in\{c,\ldots,n-k\}$. Conversely, nonexistence at cost $c$ and
distance at least $d$ excludes every smaller cost at the same $(n,k)$:
any such code could be lifted to cost $c$. This justifies the upward
construction propagation and downward exclusion propagation in
Figure~\ref{fig:map}. For the qutrit family, one step reduces the radical
dimension from three to two and raises $c=n-4$ to $n-3$. The 25 open
reference cells have $n=12,\ldots,36$, all within Theorem~\ref{thm:unified}'s
range; their listed upper bound $n-1$ makes each lifted lower bound exact.

\subsection{The closed form, and why it works}\label{app:theorem}
Early evolved constructions $[[9,1,8;6]]_2$ and $[[11,1,10;8]]_2$
(Appendix~\ref{app:campaign}) motivated the odd-length qubit family
recorded in later tasks. The construction below generalizes it to prime
powers. Its algebraic properties have also been independently rechecked
with the Lean kernel; Appendix~\ref{app:lean} gives the formal goal,
proof and correspondence with this construction.

Fix a prime power $q=p^e$, a length $n\ge 2q+1$, and an enumeration
$u_1,\dots,u_q$ of $\F_q$. On qudits $1,\dots,2q$ take $q$ disjoint pairs
and let $\vct{\rho}_i$ be $Z$-type on pair $i$ with $z$-values $(1,-1)$.
Juxtaposition and powers denote concatenation of coordinate pairs;
$(0,1)^0$ is empty, so the tail at $n=2q+1$ contains only $(0,\lambda)$. Put
\[
  \vct{a}=(1,0)^{n},\qquad
  \vct{b}=(u_1,1)(u_1,z_2)\,\big|\cdots\big|\,(u_q,1)(u_q,z_2)\,\big|\,
    (0,1)^{n-2q-1}(0,\lambda),
\]
with $z_2=1$ for odd $p$ and $z_2=0$ for $p=2$, and
$L=\operatorname{span}\{\vct{a},\vct{b},\vct{\rho}_1,\dots,\vct{\rho}_q\}$.
\begin{theorem}\label{thm:unified}
Let $q$ be a prime power. For $q\ge3$ with $n\ge2q+1$, or $q=2$
with odd $n\ge5$, one can choose $\lambda\in\F_q^{\times}$ so that
$L$ is the normalizer side of an $[[n,1,n-1;n-q-1]]_q$ EAQECC.
For $q\ge3$, its entanglement cost $c$ is $q-2$ shared pairs below
the threshold in the BDH expression; for $q=2$, it meets that threshold.
\end{theorem}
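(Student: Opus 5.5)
The plan is to compute the parameters directly from the formulas of Appendix~\ref{app:symplectic}, applied to $L=S^\perp$ with $S=L^\perp$. Since $\operatorname{rad}(S)=S\cap S^\perp=\operatorname{rad}(L)$, it suffices to establish three facts: $\dim L=q+2$; $\operatorname{rad}(L)=\operatorname{span}\{\vct{\rho}_1,\dots,\vct{\rho}_q\}$ has dimension $q$; and every vector of $L\setminus\operatorname{rad}(L)$ has symplectic weight at least $n-1$, with equality attained. Then $s=2n-q-2$, which gives
\[
c=\frac{s-q}{2}=n-q-1,\qquad k=n-s+c=1,\qquad d=n-1.
\]
The final sentence of the theorem follows by comparison with the BDH threshold. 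Substituting $k=1$ and $d=n-1$ into $2(d-1)=n-k+c$ gives $c=n-3$, and $n-3-(n-q-1)=q-2$, which is $0$ exactly when $q=2$.

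\emph{Step 1 (Gram matrix and choice of $\lambda$).} The $\vct{\rho}_i$ are $Z$-type, so they are pairwise orthogonal. For $\vct{a}$, pair $i$ contributes $1\cdot1+1\cdot(-1)=0$ to $\langle\vct{a},\vct{\rho}_i\rangle$. For $\vct{b}$, the two $x$-values on pair $i$ are both $u_i$, so pair $i$ contributes $u_i-u_i=0$ to $\langle\vct{b},\vct{\rho}_i\rangle$. The only nonzero entry is $t=\langle\vct{a},\vct{b}\rangle$, which is the sum of the $z$-coordinates of $\vct{b}$:
\[
t=q(1+z_2)+(n-2q-1)+\lambda=n-1+\lambda\quad\text{in }\F_q,
\]
because $p$ divides $q$. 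For $q\ge3$, we choose $\lambda\in\F_q^\times\setminus\{1-n\}$, which is possible since $|\F_q^\times|\ge2$. For $q=2$, we are forced to take $\lambda=1$, and then $t=n\ne0$ precisely when $n$ is odd. This is the only place where parity enters.

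\emph{Step 2 (independence and radical).} Linear independence will follow from the weight argument in Step 3, since every nonzero combination is shown to have positive weight. With $t\ne0$, the Gram matrix has rank $2$, and its kernel is spanned by the $\vct{\rho}_i$. Hence $\operatorname{rad}(L)$ is that span, of dimension $q$.

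\emph{Step 3 (distance, the main point).} Take $\vct{v}=\alpha\vct{a}+\beta\vct{b}+\sum_i\gamma_i\vct{\rho}_i$ with $(\alpha,\beta)\ne(0,0)$, which is exactly the condition $\vct{v}\notin\operatorname{rad}(L)$. We examine the coordinates in three groups.
\begin{itemize}
\item Every tail position $2q+1,\dots,n$ carries $(\alpha,\beta)$ or $(\alpha,\beta\lambda)$, which is nonzero. The tail is nonempty because $n\ge2q+1$.
\item If $\beta=0$, then every $x$-coordinate equals $\alpha\ne0$, so the weight is $n$.
\item If $\beta\ne0$, the $x$-value $\alpha+\beta u_i$ on pair $i$ vanishes for exactly one index $i$, because the $u_i$ enumerate $\F_q$. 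On that pair the $z$-values are $\beta+\gamma_i$ and $\beta z_2-\gamma_i$. Both vanish only if $\beta(1+z_2)=0$. This is impossible: for odd $p$, $1+z_2=2\ne0$, and for $p=2$, $1+z_2=1$. So at most one coordinate of $\vct{v}$ is zero, and the weight is at least $n-1$.
\end{itemize}
For equality, take $\beta=1$, $\alpha=-u_1$ and $\gamma_1=-1$. Then the first coordinate vanishes, so $d=n-1$ exactly.

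The main obstacle is Step 3: it requires choosing $z_2$ so that no combination vanishes on a whole pair, and this is exactly what distinguishes odd and even characteristic. Step 1 is the other delicate point, since $\lambda$ must make $t$ nonzero, and for $q=2$ this fails for even $n$.
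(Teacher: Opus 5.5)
Your proposal is correct and follows the paper's proof essentially step for step. It uses the same Gram computation giving $\langle\vct a,\vct b\rangle=(n-1)+\lambda$ and the same choice of $\lambda$, the identification $\operatorname{rad}(L)=\operatorname{span}\{\vct\rho_1,\ldots,\vct\rho_q\}$, the exceptional-pair weight argument resting on $1+z_2\ne0$, the same weight-$(n-1)$ witness, and the same parameter and BDH computations. The only deviation is that you obtain $\dim L=q+2$ from the weight bound, whereas the paper reads it off the last tail qudit $(\alpha,\lambda\beta)$. That works, but your Step 3 only treats $(\alpha,\beta)\ne(0,0)$, so for the remaining case you still need the independence of the disjoint-support $\vct\rho_i$. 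That independence is evident, but you use it without stating it.
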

\begin{proof}
The two vectors $\vct{a},\vct{b}\in\F_q^{2n}$ and the $q$ radical
candidates $\vct{\rho}_1,\ldots,\vct{\rho}_q$ are defined above.
Write $R_0=\operatorname{span}\{\vct{\rho}_1,\ldots,\vct{\rho}_q\}$.
Each $\vct{\rho}_i$ has two nonzero $Z$ components on a distinct
pair of qudits, so these $q$ vectors are linearly independent.
Moreover, $\vct{a}$ and $\vct{b}$ have equal $X$ components within
each pair. Their symplectic products with $\vct{\rho}_i$ therefore
vanish. The radical candidates are mutually orthogonal, since they
are all $Z$-type.

The symplectic product of the two vectors is the sum of the $Z$
components of $\vct{b}$:
\[
 \delta:=\langle\vct{a},\vct{b}\rangle
 =q(1+z_2)+(n-2q-1)\,1_{\F_q}+\lambda
 =(n-1)\,1_{\F_q}+\lambda.
\]
Here integers are interpreted in $\F_q$, and $q\,1_{\F_q}=0$
because its characteristic divides $q$. If $q\ge3$, at most one of
the $q-1\ge2$ nonzero choices of $\lambda$ makes $\delta=0$;
choose any other nonzero value. If $q=2$ and $n$ is odd, the only
choice $\lambda=1$ gives $\delta=1$. For even binary $n$, it gives
$\delta=0$: this particular construction loses its nondegenerate pair.
Thus $\delta\ne0$ in every case covered by the theorem.

To establish independence of the full generating set, suppose
$\alpha\vct{a}+\beta\vct{b}+\sum_i t_i\vct{\rho}_i=\vct{0}$.
Since $n\ge2q+1$, the final qudit is outside the support of every
$\vct{\rho}_i$. Its two components are $(\alpha,\lambda\beta)$,
so $\alpha=\beta=0$. Independence of the radical candidates then
implies $t_i=0$ for every $i$. Consequently $\dim L=q+2$.
The restriction of the symplectic form to $L$, in this basis, has
one nondegenerate $2\times2$ block
$\left(\begin{smallmatrix}0&\delta\\-\delta&0\end{smallmatrix}\right)$
and a $q\times q$ zero block. Hence its rank is two and
$\operatorname{rad}(L)=R_0$.

Every vector outside this radical has the unique form
\[
 \vct{w}=\alpha\vct{a}+\beta\vct{b}
       +\sum_{i=1}^q t_i\vct{\rho}_i,
 \qquad (\alpha,\beta)\ne(0,0).
\]
All $n-2q$ tail qudits are nonzero: their components are
$(\alpha,\beta)$, except for the final $(\alpha,\lambda\beta)$.
On pair $i$, the two coordinate pairs are
\[
 (\alpha+\beta u_i,\,\beta+t_i),\qquad
 (\alpha+\beta u_i,\,\beta z_2-t_i).
\]
If $\beta=0$, then $\alpha\ne0$ and all $n$ qudits are nonzero.
If $\beta\ne0$, exactly one index $i$ satisfies
$u_i=-\alpha/\beta$, since $u_1,\ldots,u_q$ enumerate all $q$ elements of the finite field $\F_q$,
and $-\alpha/\beta\in\F_q$.
Every other pair has two nonzero $X$ components. On the exceptional
pair, both $Z$ components could vanish only if
$t_i=-\beta=\beta z_2$. This is impossible because
$1+z_2\ne0$: it equals $2$ in odd characteristic and $1$ in
characteristic two. At most one qudit can therefore be zero, giving
$\wt(\vct{w})\ge n-1$.

This lower bound is attained. Fix any index $i$, choose
$\beta=1$, $\alpha=-u_i$, $t_i=-1$, and $t_h=0$ for $h\ne i$.
The first qudit of pair $i$ is zero and its second has nonzero
$Z$ component $1+z_2$. All other pairs have nonzero $X$ components,
and the tail remains nonzero. This vector lies outside $R_0$ and
has weight exactly $n-1$.

Finally, set $S=L^\perp$. Nondegeneracy of the ambient symplectic
form gives $S^\perp=L$ and $\dim S=2n-q-2$. Also
$S\cap S^\perp=L^\perp\cap L=R_0$, whose dimension is $q$.
The parameter formulas in Appendix~\ref{app:symplectic} now give
\[
 c=\frac{2n-q-2-q}{2}=n-q-1,\qquad
 k=n-(2n-q-2)+c=1,\qquad d=n-1.
\]
For these parameters,
$2(d-1)-(n-k+c)=(2n-4)-(2n-q-2)=q-2$.
Thus the BDH threshold exceeds $c$ by $q-2$ shared pairs for $q\ge3$
and met with equality for $q=2$, as stated.
\end{proof}
The auditor instantiates this construction and verifies every member
exactly for $q\in\{2,3,4,5\}$ and $2q+1\le n\le 2q+8$ (claim
\claim{families}); the theorem covers every prime power.

\subsection{Reference bounds and target selection}\label{app:tables}
The reference tables record, for each parameter cell $(q,n,k,c)$,
the listed lower and upper distance bounds and their construction sources.
Our comparison uses the bounds accessed on 17 July 2026. The reference
qubit table contains 563 entries whose distance bounds differ by one. The artifact
preserves these table data and a second copy retrieved on 10 September
2026; both cover qubit codes with $n\le64$ and qutrit codes with $n\le36$.
The reproduction command uses the July reference data unless another
retrieval date is explicitly selected. Schema checks reject malformed
records before classifying improvements or previously unlisted cells.

An entry with distance interval $[d_\ell,d_u]$ closes when construction
and nonexistence bounds meet. For $(q,n,k,c)=(2,7,1,4)$, a verified
$[[7,1,6;4]]_2$ code closes the listed $5\le d\le6$ gap.
Cells absent from the reference tables are called \emph{record targets} in the
archive; absence alone does not establish novelty relative to all
published constructions.

The familiar BDH Singleton expression $2(d-1)\le n-k+c$ is not a
general bound in the large-distance regime~\citep{Grassl2021}. Our
principal family has $d=n-1$, so we use the applicable generalized
Singleton bounds of \citet{GHW2022}. The $[[12,1,11;5]]_5$ code,
for example, uses four fewer pairs than substitution into the BDH
expression would suggest; it does not contradict a valid upper bound.

For qubit ($q=2$) linear stabilizer EAQECCs with $k\ge1$, the
EA-Plotkin bound~\citep{GuoLi2013} gives
$d\le\lfloor3\cdot4^{k-1}n/(4^k-1)\rfloor$.
It lowers 398 upper bounds in the reference \emph{qubit} table.
The qubit target list retains cells not excluded by this inequality,
the applicable Singleton bounds, or our certified refutations.

\subsection{Sources and dates of previous bounds}\label{app:bound-history}
The dates in Tables~\ref{tab:results} and~\ref{tab:results-qudit} refer to
verified versions of public
sources, not necessarily to the first discovery of each EAQECC parameter
set. The reference qubit table fixes which lower and upper bounds we compare
against. For $(7,1,4)$, the underlying pure $[[11,1,5]]_2$ code is
recorded by \citet{CRSS1997}; applying Theorem~7 of \citet{GHW2020}
with $\ell=4$ establishes the EAQECC lower bound by 15 October 2020.
The elapsed time to 17 July 2026 is 5.8 years when rounded to one decimal.

The other eight qubit rows use the July 12, 2022 version of \citet{Luo2022}.
For odd $n\in\{9,11,13,15\}$, Table~I contains
$[[n-1,0,n-1;n-3]]_2$. Quantum shortening gives
$[[n-2,1,n-2;n-3]]_2$, and two length extensions yield the previous
$[[n,1,n-2;n-3]]_2$ bound. The two even-length rows arise similarly:
\[
\begin{aligned}
 [[8,0,8;6]]_2 &\longrightarrow [[7,1,7;6]]_2
                  \longrightarrow [[10,1,7;6]]_2,\\
 [[10,0,10;8]]_2 &\longrightarrow [[9,1,9;8]]_2
                  \longrightarrow [[12,1,9;8]]_2.
\end{aligned}
\]
A fourth length extension of $[[9,1,9;8]]_2$ also gives the previous
$[[13,1,9;8]]_2$ bound. For the $k=2$ row, the source's
$[[17,0,12;9]]_2$ code gives
\[
 [[17,0,12;9]]_2 \longrightarrow [[13,4,8;9]]_2
 \longrightarrow [[13,2,8;9]]_2 \longrightarrow [[12,2,7;9]]_2
\]
by four quantum shortenings, taking a subcode, and puncturing.
The rules and source codes are both present in the 2022 version,
giving 4.0 years between publication and our table access date. These derivations match
the provenance strings in the archived EAQECC table. They establish that
the baselines were available by the stated dates and still listed in
2026; they do not establish a complete chronology of intervening work.

For the displayed qutrit rows, Table~II of the same 2022 source
contains $[[10,0,8;4]]_3$, $[[10,0,9;6]]_3$, and
$[[13,1,10;8]]_3$. Quantum shortening of $[[10,0,8;4]]_3$ gives $[[9,1,7;4]]_3$.
Separately, increasing the dimension of the same $[[10,0,8;4]]_3$
source code using one additional shared pair gives $[[10,1,8;5]]_3$
\citep[Section IV, rule (6)]{Luo2022}. Applying this dimension-increase rule to
$[[10,0,9;6]]_3$ and then extending its length yields
$[[11,1,9;7]]_3$. The fourth baseline is the listed
$[[13,1,10;8]]_3$ code itself. The $q=4,5$ rows have no prior entry in
the archived tables and are not included in these historical comparisons.

\section{The discovery pipeline in detail}\label{app:pipeline-detail}

\subsection{Campaign configuration and verification}\label{app:campaign}
Across the four discovery campaigns, Google Cloud Vertex AI's
AlphaEvolve service sampled
\texttt{gemini-3.5-flash} and \texttt{gemini-3.1-pro-preview} equally,
with four generation and four evaluation workers. The four campaigns
requested $120$, $120$, $80$ and $80$ programs; worker logs record
$120$, $121$, $80$ and $80$ completed searches (401 total). Service
counters are $121$, $122$, $81$ and $81$, each including an initial
program with a supplied score. Completed searches determine the nominal
allowance: $401\times240$\,s, or $26.73$\,h. Archived records report
$1.87$\,M input and $3.74$\,M output tokens. Campaigns 1--2 share one
task specification and campaigns 3--4 another (Appendix~\ref{app:prompts}).

Campaign 1 records 120 completed searches. The first new
$[[9,1,8;6]]_2$ and $[[11,1,10;8]]_2$ codes precede completions 4
and 119, respectively: 0.267\,h and 7.933\,h of allocated
search budget at 240\,s per completed search. The logs do not provide
elapsed discovery times. A later task records $[[7,1,6;4]]_2$ as an
earlier construction without a first-hit budget. The $[[13,1,12;10]]_2$
record instead cites the closed-form theorem as its source.
Initial and later tasks document the transition from finite constructions
to the researcher-derived family. Algorithm~\ref{alg:search} is a later
retained program; the logs do not identify a surviving source for every
early discovery. Section~\ref{sec:ablation} uses separately generated,
preserved programs and its own protocol.

For the complementary finite-field searches at $q=4,5$,
simulated annealing searches\label{app:nonbinary-search}
$L=\operatorname{span}\{\vct\rho_1,\ldots,\vct\rho_j,\vct a,\vct b\}$
with $j=n-1-c$ fixed, disjoint $Z$-pair rows, each with entries $(1,1)$.
Moves resample up to three entries of $\vct a$ or $\vct b$.
The objective counts non-radical span vectors below distance $n-1$ by
exact enumeration and Gram tests. Full-rank and
$\langle\vct a,\vct b\rangle_s\ne0$ checks prune proposals; Metropolis acceptance, cooling and restarts guide search.
Before saving, a separate parameter calculation checks $(k,c,d)=(1,c,n-1)$.
All 23 archived matrices have this form and pass the independent Magma
check. The \artifactlink{} supplies the routines and code-to-record mappings.

The independent Magma check receives generators only and re-derives every quantity we
report. Symplectic weight is obtained by embedding
$(x_i,z_i)\mapsto x_i+\omega z_i$ into $\mathrm{GF}(q^2)$, choosing
$\omega\notin\F_q$. Then $\{1,\omega\}$ is an $\F_q$-basis, so the image
is zero exactly when $(x_i,z_i)=(0,0)$. Magma's own \texttt{Weight()}
therefore counts nonzero coordinate pairs from the supplied generators.

Algorithm~\ref{alg:search} summarizes the archived evolved construction
program; its preserved source is available in the
\artifactlink{}.\label{app:construction-program}
The generator matrix $\mathbf{G}$ spans $S$, and $best[t]$ stores the
lowest-cost proposal seen for target $t$. The objective remains
$5000|r.c-c_t|+1000|r.k-k_t|+r.\mathrm{offending}$, with cost $10^9$
for an evaluator error. Here $r.\mathrm{offending}$ counts
$\vct v\in S^\perp\setminus S_{\mathrm{iso}}$ with $\wt(\vct v)<d_t$;
it is an integer count. After return, the external evaluator instead
maximizes $1000-\mathrm{offending}$ for matching $(k,c)$, adding $10^6$
for a newly achieved archive target. A signature mismatch scores
$-5000|r.c-c_t|-1000$; the best proposal sets the program score.
Both calls use the exact parameter routine; Magma supplies the separate
implementation check.
The campaign seed used fixed target weights
$n_t^{-2}$. The evolved program uses normalized weights proportional to
$n_t^{-1.5}(1+10b_t)$, initially with $b_t=0$. After a repair attempt,
\textsc{ProgressBonus} uses the stored best cost $o_t^\star$:
\[
 b_t\gets
 \begin{cases}
  (1+o_t^\star)^{-1},&0<o_t^\star<100,\\
  0,&o_t^\star=0,\\
  b_t,&\text{otherwise}.
 \end{cases}
\]
If no proposal was recorded, take $o_t^\star=10^9$. A proposal with
zero cost at its initial evaluation takes the earlier \textbf{continue}
branch and skips this bonus update, exactly as in the source.

For a new attempt, the program samples one scalar $u$ and tests branches
in order. If $u<0.2$ and a stored positive-cost proposal exists, it resumes
from that matrix. Otherwise, \textsc{Construct} uses a cyclic construction
when $u<0.4$ and $s\le2n$, a divisor-stride construction when $u<0.6$
and $s\le2n$, or a block construction when $u<0.8$. If none applies,
or the resulting rank is not $s$, it uses a random stabilizer.
The construction branches and their validity checks remain subject to the
same exact evaluator as the seed.

The cyclic branches use one or two nonzero base vectors. Row $i$ takes
base $i\bmod m$ and rotates it by
$2h\lfloor i/m\rfloor\bmod 2n$ bits, where $m$ is the number of bases.
The ordinary cyclic branch sets $h=1$; the added branch samples a positive
proper divisor of $n$. The block branch extends the allowed dual dimension
from 8 to 12 and block sizes from $\{2,4,6\}$ to $\{2,4,6,8\}$.

During repair, \textsc{MixedMove} samples $v\in[0,1)$: if $v<0.2$
and $s\ge2$, XOR two distinct rows; otherwise, if $v<0.4$, rotate one
row by $1,\ldots,n-1$ qubits uniformly; otherwise, apply a weight-1--3
Pauli perturbation to one row. The rotation changes one generator,
not all coordinates of the code. Rank-deficient proposals increment $f$
without evaluation. The inherited Metropolis loop cools by $0.995$ to
$0.1$ and restarts after 1,500 attempts without an accepted strict
improvement. It retains each target's best accepted proposal and returns
at most 16. Figure~\ref{fig:lineage} shows source changes, not their
individual performance effects.

\subsection{The refutation encoding}\label{app:sat}
Encode $L=S^\perp$ using $j=n-k-c$ radical generators
$\vct\rho_1,\ldots,\vct\rho_j$ and $k$ hyperbolic pairs
$(\vct u_i,\vct v_i)$. Gram constraints enforce rank $2k$ and pair
independence. For qubits, all $2^{2k+j}-2^j$ non-radical combinations
must have weight at least $d$. Gray-code chaining changes one generator
contribution at a time; Sinz counters enforce weight. When $j\ge1$,
local symplectic maps and coordinate permutations make $\vct\rho_1$
a nonzero $Z$-prefix. For its bits $(x_i,z_i)$, the CNF sets
$x_i=0$, $z_1=1$ and adds $(z_i\lor\neg z_{i+1})$ for $i<n$.
These clauses allow every nonempty prefix in one CNF, without fixing its
weight or making the other radical rows $Z$-only.

For qutrits, each field element is one-hot encoded. The radical basis is
in reduced row echelon form, and modular state chains impose its
Gram constraints. For $k=1$, it suffices to test the four projective
logical representatives $\vct{u},\vct{v},\vct{u}+\vct{v},\vct{u}+2\vct{v}$, each shifted by every radical
vector. Their nonzero scalar multiples have the same weights. At
$d=n-1$, pairwise clauses prevent a representative from vanishing at two
coordinates. The following lemma permits an additional normalization.

\begin{lemma}[Coordinate normalization]\label{lem:coordinate-normalization}
Let $L\le\F_q^{2n}$ have radical $R$ with $\dim L=\dim R+2$ and
$\dim R>2$. If every vector in $L\setminus R$ has symplectic weight at
least $n-1$, then local symplectic transformations can make $R$ entirely
$Z$-type.
\end{lemma}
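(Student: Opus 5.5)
The plan is to reduce the claim to a statement about single coordinates, and then prove that statement by a dimension count that uses both hypotheses.

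\emph{Reduction.} For each coordinate $m$, let $\pi_m:\F_q^{2n}\to\F_q^2$ be the projection onto $(x_m,z_m)$. On one coordinate the local symplectic form is the determinant, so the local symplectic group is $SL(2,q)$. This group acts transitively on nonzero vectors of $\F_q^2$, hence on lines. If every $\pi_m(R)$ has dimension at most one, we choose at each coordinate $m$ an element of $SL(2,q)$ sending the line $\pi_m(R)$ into $\operatorname{span}\{(0,1)\}$. After these maps, $R$ is entirely $Z$-type. Local symplectic maps preserve symplectic weight, the symplectic form and hence radicals, so they do not disturb the hypotheses. It therefore suffices to show that $\pi_i(R)=\F_q^2$ is impossible for every coordinate $i$.

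\emph{Setup under the contrary assumption.} Write $j=\dim R\ge3$ and suppose $\pi_i(R)=\F_q^2$ for some $i$. Put $R_i=\ker(\pi_i|_R)$, so $\dim R_i=j-2\ge1$. Put $K_i=\ker(\pi_i|_L)$, so $\dim K_i\ge (j+2)-2=j$. Since $\pi_i(R)$ is already all of $\F_q^2$, every $\vct{w}\in L$ can be corrected by some $\vct{r}\in R$ so that $\vct{w}-\vct{r}\in K_i$. Hence $K_i+R=L$, and therefore $K_i/R_i$ embeds into $L/R$ with image all of $L/R$. Thus $K_i/R_i$ is two-dimensional, and its nonzero classes are exactly the classes of vectors in $K_i\setminus R$.

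\emph{Key step.} Each vector of $K_i\setminus R$ already vanishes at coordinate $i$. Because its weight is at least $n-1$, it is nonzero at every other coordinate $m\neq i$. Now take $\vct{w}\in K_i\setminus R$ and $\vct{r}\in R_i$. Then $\vct{w}+\vct{r}\in K_i\setminus R$, so $\pi_m(\vct{w}+\vct{r})\ne0$. This means $\pi_m(\vct{w})\notin\pi_m(R_i)$. Consequently the linear map
\[
K_i/R_i\to\F_q^2/\pi_m(R_i)
\]
is well defined and sends every nonzero class to a nonzero element, so it is injective. If $\pi_m(R_i)\ne0$, the target has dimension at most one, and an injective map from a two-dimensional space is impossible. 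Hence $\pi_m(R_i)=0$ for every $m\ne i$. Together with $\pi_i(R_i)=0$, this gives $R_i=0$, contradicting $\dim R_i=j-2\ge1$. So $\dim\pi_i(R)\le1$ for every $i$, and the reduction finishes the proof.

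The main obstacle is this key step: finding the right auxiliary space. The hypothesis $\dim R>2$ is used exactly to make $R_i$ nonzero, and the weight bound $n-1$ is used exactly to forbid a second vanishing coordinate. I would double-check the identification $K_i/R_i\cong L/R$, since the whole argument relies on the injectivity of the map above.
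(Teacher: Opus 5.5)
Your proof is correct and takes essentially the same route as the paper: reduce to $\dim\pi_i(R)\le1$ at every coordinate and normalize with local $SL(2,q)$ maps, then rule out $\pi_i(R)=\F_q^2$ by showing that $\ker(\pi_i|_R)$ would have to vanish at every other coordinate, contradicting $\dim R>2$. The difference is only in packaging. The paper gets this from rank--nullity for the two-coordinate projections $\pi_{ih}$: the distance hypothesis gives $\ker(\pi_{ih}|_L)\subseteq R$, so $\dim\pi_{ih}(R)\le2$ and hence $\ker(\pi_{ih}|_R)=\ker(\pi_i|_R)$. You instead use $K_i/R_i\cong L/R$ and injectivity into $\F_q^2/\pi_m(R_i)$, and you invoke transitivity of $SL(2,q)$ where the paper writes explicit matrices $M_i$.
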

\begin{proof}[Sketch of proof]
By definition, $R=\operatorname{rad}(L)=L\cap L^\perp$.
Let $\pi_i:\F_q^{2n}\to\F_q^2$ be $\pi_i(\vct v)=(x_i,z_i)$,
and $\pi_{ih}=(\pi_i,\pi_h):\F_q^{2n}\to\F_q^4$ for $i\ne h$.
Writing $K_{ih}=\ker(\pi_{ih}|_L)$, the distance hypothesis gives
\[
 K_{ih}\subseteq\{\vct v\in L:\wt(\vct v)\le n-2\}\subseteq R,
 \qquad K_{ih}=\ker(\pi_{ih}|_R).
\]
Rank--nullity therefore yields
\[
 \dim\pi_{ih}(L)-\dim\pi_{ih}(R)=\dim L-\dim R=2,
 \qquad \dim\pi_{ih}(R)\le2.
\]
Suppose $\dim\pi_i(R)=2$. For every $h\ne i$,
$2=\dim\pi_i(R)\le\dim\pi_{ih}(R)\le2$. Hence the inclusion
$\ker(\pi_{ih}|_R)\subseteq\ker(\pi_i|_R)$ is an equality, since
both kernels have dimension $\dim R-2$. Consequently
\[
 \ker(\pi_i|_R)=\bigcap_{h\ne i}\ker(\pi_{ih}|_R)=\{\vct0\},
\]
which implies $\dim R=\dim\pi_i(R)=2$, a contradiction.
Thus $\dim P_i\le1$ for every $P_i:=\pi_i(R)$.

If $P_i=\{\vct0\}$, take $M_i=I_2$. Otherwise write
$P_i=\operatorname{span}\{\vct w_i\}$ with
$\vct w_i=(a_i,b_i)^{\mathsf T}\ne\vct0$, and set
\[
 M_i=
 \begin{cases}
  \begin{pmatrix}b_i&-a_i\\a_i^{-1}&0\end{pmatrix},&a_i\ne0,\\[3pt]
  \begin{pmatrix}b_i&0\\0&b_i^{-1}\end{pmatrix},&a_i=0.
 \end{cases}
\]
In both cases $\det M_i=1$ and $M_i\vct w_i=(0,1)^{\mathsf T}$.
For $J=\left(\begin{smallmatrix}0&1\\-1&0\end{smallmatrix}\right)$,
$M_i^{\mathsf T}JM_i=J$. Therefore
$T=\operatorname{diag}(M_1,\ldots,M_n)$ is symplectic and
\[
 T(R)\subseteq\{(0,z_1,\ldots,0,z_n):z_i\in\F_q\},
 \qquad \wt(T\vct v)=\wt(\vct v)\quad(\vct v\in\F_q^{2n}).
\]
The weight identity follows because every $M_i$ is invertible.
Thus $T$ gives the required local normalization.
\end{proof}

The lemma also applies to binary targets with $k=1$, $d\ge n-1$
and $j>2$. Adding all radical-$X$ zero constraints to the seven
binary exclusions gives seven further UNSAT instances with checked
DRAT proofs. A feasible $[[6,1,5;2]]_2$ control remains SAT and its
decoded generators pass independent parameter checks. The artifact
preserves these normalized certificates alongside the original ones.

For $[[10,1,9;4]]_3$, $j=5$. Row reduction preserves the zero
$X$ components, so the normalized encoding retains every possible code.
Adding $jn=50$ unit clauses gives $33{,}033$ variables and $476{,}271$ clauses.
The lemma establishes completeness; DRAT certifies unsatisfiability.
As a positive control, the same normalized encoder is SAT when fixed to
the archived $[[10,1,9;5]]_3$ code. The \artifactlink{} supplies the CNF,
proof, normalization metadata and checks of the added clauses.

For certificate verification,\label{app:sat-resources} the original CNFs
are summarized in Table~\ref{tab:sat-resources}. Solver time
measures a fresh UNSAT decision; for $[[10,1,9;5]]_2$, it also includes
proof generation. The check time uses DRAT-trim. Peak is the larger solver/checker resident set.
Concurrent workloads preclude isolated timing comparisons. Execution
conditions and logs are in the \artifactlink{}.

\begin{table}[H]
\centering
\caption{Original SAT refutation resources.}\label{tab:sat-resources}
\small
\setlength{\tabcolsep}{4pt}
\begin{tabular}{@{}lrrrrrr@{}}
\toprule
\multicolumn{1}{c}{\multirow{2}{*}{Excluded code}} & \multicolumn{1}{c}{\multirow{2}{*}{Variables}} & \multicolumn{1}{c}{\multirow{2}{*}{Clauses}} & \multicolumn{1}{c}{Proof} & \multicolumn{1}{c}{Solver} & \multicolumn{1}{c}{Check} & \multicolumn{1}{c}{Peak} \\
 & & & \multicolumn{1}{c}{(MiB)} & \multicolumn{1}{c}{(s)} & \multicolumn{1}{c}{(s)} & \multicolumn{1}{c}{(MiB)} \\
\midrule
$[[5,2,4;3]]_2$ & 414 & 1,275 & 0.03 & 0.02 & 0.46 & 48.2 \\
$[[6,1,5;1]]_2$ & 1,881 & 6,425 & 22.86 & 1.88 & 1.88 & 68.1 \\
$[[6,2,5;4]]_2$ & 498 & 1,540 & 0.04 & 0.02 & 0.04 & 48.1 \\
$[[7,1,6;3]]_2$ & 1,110 & 3,666 & 7.66 & 0.61 & 0.51 & 57.2 \\
$[[8,1,7;3]]_2$ & 2,513 & 8,609 & 172.27 & 16.94 & 13.58 & 195.0 \\
$[[9,1,8;5]]_2$ & 1,430 & 4,734 & 11.86 & 0.89 & 0.73 & 60.2 \\
$[[10,1,9;5]]_2$ & 3,145 & 10,793 & 234.91 & 66.33 & 58.76 & 491.0 \\
$[[11,1,10;7]]_2$ & 1,750 & 5,802 & 16.58 & 1.23 & 0.89 & 63.1 \\
$[[13,1,12;8]]_2$ & 4,093 & 14,069 & 43.48 & 3.46 & 2.48 & 81.3 \\
$[[10,1,9;4]]_3$ & 33,033 & 476,271 & 28.93 & 11.90 & 62.25 & 239.9 \\
\bottomrule
\end{tabular}
\end{table}


\begingroup
\raggedbottom
\subsection{Researcher proposals and task specifications}\label{app:prompts}
These structured summaries separate researcher input from the task given
to the model. Shaded rows mark researcher proposals or findings. Full
specifications are preserved verbatim in the \artifactlink{}.
The controlled study is separate from the discovery campaigns.

\newcommand{\taskheading}[2]{%
  \par\addvspace{8pt}\noindent
  {\small\rmfamily\bfseries\color{achieve}#1}\hfill
  {\footnotesize\rmfamily\color{black!60}#2}%
  \par\nobreak\vspace{3pt}{\color{achieve!45}\hrule height .5pt}%
  \nobreak\vspace{4pt}}
\newcommand{\taskrow}[3]{%
  \begin{tcolorbox}[enhanced,colback=#1,frame hidden,boxrule=0pt,
    arc=0pt,left=6pt,right=6pt,top=3pt,bottom=3pt,
    before skip=0pt,after skip=2pt]
  \small\rmfamily
  \begin{tabularx}{\linewidth}{@{}>{\raggedright\arraybackslash\hyphenpenalty=10000\exhyphenpenalty=10000}p{2.55cm}@{\hspace{8pt}}X@{}}
  {\footnotesize\rmfamily\bfseries\color{black!65}#2} & #3
  \end{tabularx}
  \end{tcolorbox}}
\newcommand{\tasksource}[1]{%
  \par\nobreak\noindent{\scriptsize\rmfamily\color{black!55}#1}\par}


\begin{minipage}{\linewidth}
\taskheading{Discovery campaigns 1--2}{Initial search}
\taskrow{achieve!10}{Researcher proposal}{Explore cyclic qubit shifts, constructive warm starts, $\mathrm{GF}(4)$-additive constructions and coupled generator moves. Reuse structure between targets sharing $(n,c)$.}
\taskrow{white}{Task to the model}{Construct binary EAQECCs for 68 open table entries. Each target $[[n,k,d;c]]_2$ has listed lower distance bound $d-1$ and upper bound $d$.}
\taskrow{white}{Exact feedback}{Match the requested $(k,c)$ and reach distance $d$. The evaluator checks the symplectic constraints and reports the number of nontrivial logical operators below the target distance.}
\taskrow{white}{Knowledge supplied}{Stabilizer representation, evaluator access and construction directions. The family formula derived later was not supplied.}
\tasksource{Source: archived task specifications for campaigns 1 and 2.}
\end{minipage}
\par


\begin{minipage}{\linewidth}
\taskheading{Discovery campaigns 3--4}{After generalization}
\taskrow{achieve!10}{Researcher result}{The recorded qubit constructions $[[7,1,6;4]]_2$, $[[9,1,8;6]]_2$ and $[[11,1,10;8]]_2$ were generalized by researchers to the odd-length family $[[n,1,n-1;n-3]]_2$.}
\taskrow{achieve!10}{Researcher proposal}{Develop further constructions using other shift steps, quasi-cyclic orbits, multiple radical blocks, $\mathrm{GF}(4)$-additive constructions, and extension, shortening or puncturing of the known family.}
\taskrow{white}{Task to the model}{Search 329 qubit parameter targets with $5\le n\le13$, including listed gaps and previously unlisted parameter cells. The task now explicitly includes the discovered qubit family.}
\taskrow{white}{Exact feedback}{Use the same parameter and distance checks. The specification also permits constructing the smaller dual space and converting it to stabilizer generators for evaluation.}
\tasksource{Source: archived task specifications for campaigns 3 and 4.}
\end{minipage}
\par


\begin{minipage}{\linewidth}
\taskheading{Controlled study}{Shared proposal for both policies}
\taskrow{achieve!10}{Researcher proposal}{Maintain and mutate a basis of $L=S^\perp$ throughout search. Convert each candidate $L$ to $S$ for exact evaluation. For the training targets, $L$ has only four basis vectors.}
\taskrow{white}{Task to the model}{Implement \texttt{search(target, max\_evals)} under a separate quota for each target and seed. Independent proposals and iterative evolution receive the same research suggestion.}
\taskrow{white}{Information withheld}{No successful code matrix, family formula or working $L$-side search program is supplied. There is no controller-provided initialization.}
\tasksource{Source: archived controlled-study task; this is separate from the discovery campaigns.}
\end{minipage}
\par


The controlled-study protocol and calibration are specified in
Appendix~\ref{app:hitl-protocol}.


\endgroup

\section{Research-guided ablation details}\label{app:ablations}
\subsection{Shared research guidance and evaluation calibration}
\label{app:hitl-protocol}
The controlled study uses Gemini 3.5 Flash through Vertex
\texttt{generateContent}, with a local program-development controller.
It is distinct from the managed AlphaEvolve discovery campaigns.
The two policies receive identical instructions to maintain and mutate
$L=S^\perp$ throughout search and convert candidate bases to $S$ using
binary nullspace and symplectic-swap helpers. The initial code is the
same $S$-side annealing callback in both policies. There is no protected
initializer. Programs may change representations, objectives, mutation
operators and restart schedules; the working $L$-side calibration code,
archived codes and family formula are not provided to either policy.

Before model generation, four calibration seeds compare the initial
callback with an interface-adapted, researcher-assisted $L$-side control
at up to $10^5$ evaluator calls. The controls depend on the call limit
only through the remaining allowance, so their first-hit records give
success curves at 1,000, 3,000, 10,000, 30,000 and 100,000 calls.
At $n=9$, the rule selects the smallest grid budget with initial
success between 0.1 and 0.5, control success at least 0.5 and a difference
of at least 0.25. A second cell may have zero initial success if the
control succeeds in at least half the executions. The selected budget
is 10,000: initial/control success is 1/4 versus 2/4 at $n=9$, and
0/4 versus 2/4 at $n=11$. Calibration is a test of whether the evaluator
can reward a useful edit, not a measurement of evolutionary contribution.
Independently verified codes establish feasibility at all training and transfer
parameters before generation.

\subsection{Generation, interruption and selection}\label{app:hitl-selection}
Four blocks each plan 12 proposals per policy. Independent proposals
receive the fixed initial source and its fixed training feedback.
Iterative evolution receives the best-so-far source and the latest
attempt, including rejected attempts, with their training feedback.
Training uses two fixed execution seeds at $n=9,11$. Each proposal is
ranked lexicographically by mean normalized success-curve area, success
fraction and clipped distance ratio; earlier proposals break remaining
ties. For first-hit call $h$ under allowance $B$, the primary utility is
\[
 u(h;B)=\begin{cases}1-h/B,&\text{if the target is solved by call }B,\\
 0,&\text{otherwise.}\end{cases}
\]
This equals the normalized integral of the success indicator over the
available evaluator-call budget. The initial program remains eligible
for selection. Proposal counts match; token and execution costs need not.

The planned 12-proposal runs received 58 model responses before
unavailable responses interrupted generation. Before any validation or
test observations, a completion-only rule retained the longest fully
received prefix shared by every policy and block: seven proposals each,
56 total. The two additional responses and all request records are
preserved separately. Earlier exploratory studies informed calibration
and guidance; their measurements are not pooled into this comparison.

For the included 28 requests per policy, recorded input/output/reasoning
token totals are 50,344/22,891/22,780 for independent proposals and
86,501/26,145/32,128 for iterative evolution. Mean request latency is
8.77\,s and 11.93\,s, respectively. The additional source and feedback
in iterative prompts therefore incur greater model usage despite equal
proposal counts. Receipt-level accounting is preserved in the artifact; service latency
is separate from search time.

After generation ends, four new validation seeds at the training
parameters select among each policy/block's three training-best distinct
programs, using the same metric order and then training rank. The eight
selected programs are distinct. Test uses eight further seeds at each
training parameter and 10,000 calls per execution. The separate length
transfer uses $n=13,15$, four seeds and 30,000 calls. Transfer measurements
never enter selection. All seed lists, requests, source hashes, parent
relationships, candidate outcomes and selections are retained in
\texttt{experiments/hitl\_ablation/}.

\subsection{Complete policy outcomes}\label{app:hitl-outcomes}
Table~\ref{tab:hitl-blocks} reports the original test cohort and the
32-seed repeat of the same validation-selected programs. Displayed values
are rounded; differences use unrounded values. The new seeds
were fixed before execution and did not change any program or selection.
The original paired mean curve-area difference is $+0.1188$ (two positive
blocks), versus $+0.0044$ in the repeat (one positive block). Descriptive
two-sided sign-flip values over the four block differences are 0.50 and
1.00. These calculations use four generation blocks; execution seeds
within one program quantify its run-to-run variability. Excluding the
block affected by later unavailable responses gives an original mean
difference of $+0.0819$. All included proposals precede those responses.

\begin{table}[!htbp]
\centering\small
\begin{tabular}{crrrrrr}
\toprule
& \multicolumn{2}{c}{Original /16} & \multicolumn{2}{c}{Repeat /64}
& \multicolumn{2}{c}{Curve-area difference}\\
\cmidrule(lr){2-3}\cmidrule(lr){4-5}\cmidrule(lr){6-7}
Block & Independent & Iterative & Independent & Iterative & Original & Repeat\\
\midrule
1 & 4 & 3 & 38 & 7 & $-0.114$ & $-0.201$ \\
2 & 4 & 10 & 27 & 26 & $+0.229$ & $-0.014$ \\
3 & 5 & 16 & 34 & 49 & $+0.395$ & $+0.287$ \\
4 & 8 & 9 & 36 & 33 & $-0.036$ & $-0.055$ \\
\midrule
Total/mean & 21/64 & 38/64 & 135/256 & 115/256 & $+0.119$ & $+0.004$\\
\bottomrule
\end{tabular}
\caption{Test outcomes by generation block.}\label{tab:hitl-blocks}
\end{table}

Iterative selections are proposals 2, 1, 3 and 1 in blocks 1--4,
respectively (proposal numbers start at one). Thus two selections are
first proposals and two incorporate feedback from earlier generated
programs. The classical $L$-side control uses the shared proposed
representation with the preserved annealing objective, moves and
restart policy; its source is unchanged from calibration.

The repeat covers 32 new seeds at each of $n=9,11,13,15$, at 10,000
calls for the first two lengths and 30,000 for the latter two. It
includes all eight selected programs, both fixed controls, and the
first two programs in the block-3 iterative lineage. Together with
24 executions of the classical control on the original test and
longer-code seeds, this gives 1,560 executions and 545 independently
verified successful code records, with no process errors. All records,
program hashes and the frozen protocol are in
\texttt{experiments/hitl\_followup/}.

At $n=9,11$, the repeat gives 96/128 and 39/128 successes for independent
proposals, versus 81/128 and 34/128 for iterative evolution. The classical
control gives 26/32 and 10/32, and the initial program 1/32 and 0/32.
Figure~\ref{fig:hitl-followup} shows the longer-code curves; the artifact
also retains the complete four-length plot.

\begin{figure}[!htbp]
\centering
\resizebox{\linewidth}{!}{
\definecolor{hfinitial}{HTML}{A37545}
\definecolor{hfind}{HTML}{78818E}
\definecolor{hfevo}{HTML}{00858B}
\definecolor{hfcontrol}{HTML}{B55D32}
\begin{tikzpicture}[x=1cm,y=1cm,font=\scriptsize]
\path[use as bounding box] (-1,-1.70) rectangle (13.3,3.7);
\begin{scope}[xshift=0cm,yshift=0cm]
\node[font=\scriptsize\bfseries] at (2.825,3.2) {(a) $n=13,\ d=12$};
\draw[black!12,line width=.35pt] (0,0.0)--(5.65,0.0);
\node[anchor=east] at (-.12,0.0) {0};
\draw[black!12,line width=.35pt] (0,0.7)--(5.65,0.7);
\node[anchor=east] at (-.12,0.7) {25};
\draw[black!12,line width=.35pt] (0,1.4)--(5.65,1.4);
\node[anchor=east] at (-.12,1.4) {50};
\draw[black!12,line width=.35pt] (0,2.1)--(5.65,2.1);
\node[anchor=east] at (-.12,2.1) {75};
\draw[black!12,line width=.35pt] (0,2.8)--(5.65,2.8);
\node[anchor=east] at (-.12,2.8) {100};
\draw[black!40] (0,2.8)--(0,0)--(5.65,0);
\node[rotate=90] at (-.72,1.4) {Success rate (\%)};
\node at (2.825,-.60) {Exact evaluations};
\node[anchor=north] at (0.0,-.10) {0};
\node[anchor=north] at (1.8833333333333333,-.10) {10k};
\node[anchor=north] at (3.7666666666666666,-.10) {20k};
\node[anchor=north] at (5.65,-.10) {30k};
\draw[hfind,line width=1.0pt] (0.00000,0.00000) -- (1.07237,0.00000) -- (1.07237,0.02187) -- (1.30138,0.02187) -- (1.30138,0.04375) -- (1.46787,0.04375) -- (1.46787,0.06562) -- (1.76525,0.06562) -- (1.76525,0.08750) -- (3.14385,0.08750) -- (3.14385,0.10938) -- (3.45422,0.10938) -- (3.45422,0.13125) -- (3.87496,0.13125) -- (3.87496,0.15312) -- (3.91488,0.15312) -- (3.91488,0.17500) -- (4.14107,0.17500) -- (4.14107,0.19687) -- (4.55823,0.19687) -- (4.55823,0.21875) -- (4.55899,0.21875) -- (4.55899,0.24062) -- (4.68121,0.24062) -- (4.68121,0.26250) -- (4.70739,0.26250) -- (4.70739,0.28437) -- (4.73602,0.28437) -- (4.73602,0.30625) -- (5.46751,0.30625) -- (5.46751,0.32812) -- (5.50442,0.32812) -- (5.50442,0.35000) -- (5.61007,0.35000) -- (5.61007,0.37187) -- (5.65000,0.37187);
\draw[hfevo,line width=1.0pt] (0.00000,0.00000) -- (0.29964,0.00000) -- (0.29964,0.02187) -- (0.80776,0.02187) -- (0.80776,0.04375) -- (0.82377,0.04375) -- (0.82377,0.06562) -- (0.82528,0.06562) -- (0.82528,0.08750) -- (0.98404,0.08750) -- (0.98404,0.10938) -- (1.04808,0.10938) -- (1.04808,0.13125) -- (1.14883,0.13125) -- (1.14883,0.15312) -- (1.51194,0.15312) -- (1.51194,0.17500) -- (1.57334,0.17500) -- (1.57334,0.19687) -- (1.59236,0.19687) -- (1.59236,0.21875) -- (1.62362,0.21875) -- (1.62362,0.24062) -- (1.72702,0.24062) -- (1.72702,0.26250) -- (1.87900,0.26250) -- (1.87900,0.28437) -- (1.90763,0.28437) -- (1.90763,0.30625) -- (1.94153,0.30625) -- (1.94153,0.32812) -- (2.08598,0.32812) -- (2.08598,0.35000) -- (2.12120,0.35000) -- (2.12120,0.37187) -- (2.16979,0.37187) -- (2.16979,0.39375) -- (2.17299,0.39375) -- (2.17299,0.41562) -- (2.29691,0.41562) -- (2.29691,0.43750) -- (2.42310,0.43750) -- (2.42310,0.45937) -- (2.46867,0.45937) -- (2.46867,0.48125) -- (2.49353,0.48125) -- (2.49353,0.50312) -- (2.75833,0.50312) -- (2.75833,0.52500) -- (2.91069,0.52500) -- (2.91069,0.54688) -- (3.25986,0.54688) -- (3.25986,0.56875) -- (3.42955,0.56875) -- (3.42955,0.59062) -- (3.44951,0.59062) -- (3.44951,0.61250) -- (3.55121,0.61250) -- (3.55121,0.63437) -- (3.91074,0.63437) -- (3.91074,0.65625) -- (4.00623,0.65625) -- (4.00623,0.67812) -- (4.17780,0.67812) -- (4.17780,0.70000) -- (4.40399,0.70000) -- (4.40399,0.72187) -- (4.54147,0.72187) -- (4.54147,0.74375) -- (4.66728,0.74375) -- (4.66728,0.76562) -- (5.04790,0.76562) -- (5.04790,0.78750) -- (5.28840,0.78750) -- (5.28840,0.80937) -- (5.45507,0.80937) -- (5.45507,0.83125) -- (5.49594,0.83125) -- (5.49594,0.85312) -- (5.65000,0.85312);
\draw[hfcontrol,line width=1.0pt,dash pattern=on 4pt off 1.5pt] (0.00000,0.00000) -- (1.46787,0.00000) -- (1.46787,0.08750) -- (4.70739,0.08750) -- (4.70739,0.17500) -- (5.61007,0.17500) -- (5.61007,0.26250) -- (5.65000,0.26250);
\draw[hfinitial,line width=1.0pt,dash pattern=on 3pt off 2pt] (0.00000,0.00000) -- (5.65000,0.00000);
\end{scope}
\begin{scope}[xshift=7.2cm,yshift=0cm]
\node[font=\scriptsize\bfseries] at (2.825,3.2) {(b) $n=15,\ d=14$};
\draw[black!12,line width=.35pt] (0,0.0)--(5.65,0.0);
\node[anchor=east] at (-.12,0.0) {0};
\draw[black!12,line width=.35pt] (0,0.7)--(5.65,0.7);
\node[anchor=east] at (-.12,0.7) {25};
\draw[black!12,line width=.35pt] (0,1.4)--(5.65,1.4);
\node[anchor=east] at (-.12,1.4) {50};
\draw[black!12,line width=.35pt] (0,2.1)--(5.65,2.1);
\node[anchor=east] at (-.12,2.1) {75};
\draw[black!12,line width=.35pt] (0,2.8)--(5.65,2.8);
\node[anchor=east] at (-.12,2.8) {100};
\draw[black!40] (0,2.8)--(0,0)--(5.65,0);
\node[rotate=90] at (-.72,1.4) {Success rate (\%)};
\node at (2.825,-.60) {Exact evaluations};
\node[anchor=north] at (0.0,-.10) {0};
\node[anchor=north] at (1.8833333333333333,-.10) {10k};
\node[anchor=north] at (3.7666666666666666,-.10) {20k};
\node[anchor=north] at (5.65,-.10) {30k};
\draw[hfind,line width=1.0pt] (0.00000,0.00000) -- (3.09639,0.00000) -- (3.09639,0.02187) -- (5.65000,0.02187);
\draw[hfevo,line width=1.0pt] (0.00000,0.00000) -- (0.57065,0.00000) -- (0.57065,0.02187) -- (0.63337,0.02187) -- (0.63337,0.04375) -- (0.66953,0.04375) -- (0.66953,0.06562) -- (1.17897,0.06562) -- (1.17897,0.08750) -- (1.39103,0.08750) -- (1.39103,0.10938) -- (1.43359,0.10938) -- (1.43359,0.13125) -- (1.44471,0.13125) -- (1.44471,0.15312) -- (1.66204,0.15312) -- (1.66204,0.17500) -- (1.68408,0.17500) -- (1.68408,0.19687) -- (2.08447,0.19687) -- (2.08447,0.21875) -- (2.66850,0.21875) -- (2.66850,0.24062) -- (2.67716,0.24062) -- (2.67716,0.26250) -- (3.29903,0.26250) -- (3.29903,0.28437) -- (4.24880,0.28437) -- (4.24880,0.30625) -- (4.50550,0.30625) -- (4.50550,0.32812) -- (4.68366,0.32812) -- (4.68366,0.35000) -- (4.77067,0.35000) -- (4.77067,0.37187) -- (5.29876,0.37187) -- (5.29876,0.39375) -- (5.34471,0.39375) -- (5.34471,0.41562) -- (5.40328,0.41562) -- (5.40328,0.43750) -- (5.65000,0.43750);
\draw[hfcontrol,line width=1.0pt,dash pattern=on 4pt off 1.5pt] (0.00000,0.00000) -- (5.65000,0.00000);
\draw[hfinitial,line width=1.0pt,dash pattern=on 3pt off 2pt] (0.00000,0.00000) -- (5.65000,0.00000);
\end{scope}
\draw[hfinitial,line width=1pt,dash pattern=on 3pt off 2pt] (1,-1.1)--(1.55,-1.1);
\node[anchor=west] at (1.65,-1.1) {Initial program};
\draw[hfcontrol,line width=1pt,dash pattern=on 3pt off 2pt] (7.1,-1.1)--(7.6499999999999995,-1.1);
\node[anchor=west] at (7.75,-1.1) {Classical L-side control};
\draw[hfind,line width=1pt] (1,-1.53)--(1.55,-1.53);
\node[anchor=west] at (1.65,-1.53) {Independent proposals};
\draw[hfevo,line width=1pt] (7.1,-1.53)--(7.6499999999999995,-1.53);
\node[anchor=west] at (7.75,-1.53) {Iterative evolution};
\end{tikzpicture}

}
\caption{Longer-code replay on fresh execution seeds.}
\label{fig:hitl-followup}
\end{figure}


\FloatBarrier

Helper computations remain outside the evaluator-call count but inside
the common resource limits. Each target/seed runs in a separate sandbox
with 120/125\,s soft/hard CPU limits, 135\,s wall time and 1\,GiB resident
memory. Every explicit or returned-candidate verification consumes the
same call quota. Execution times and model request latencies are
recorded separately in the artifact.

The artifact audit reconstructs the original proposal inputs and parent
relationships, checks validation selections and independently verifies
saved code records. The complete inputs and replay commands are available
in the \artifactlink{}.

\subsection{Prospective generation replication}\label{app:replication}
The new cohort fixes eight paired blocks and seven proposal slots per
policy before generation. Model, shared task, starting program,
representation helpers, ranking and validation selection match the
original study. New model seeds and disjoint training, validation,
test and longer-code execution seeds separate the cohorts. Each block
uses two training seeds, four validation seeds and 32 held-out seeds
per length. The test budget is 10,000 calls at $n=9,11$; longer-code
performance at 30,000 calls is secondary.

Of 112 allocated slots, 111 return programs: 56 independent and 55
iterative. One response in block 8 is unavailable. Before validation
or testing, the recorded rule retained all eight blocks, using received
programs without replacing the missing proposal. Subsequent iterative
requests use the most recent received source and feedback. The seven
fully received paired blocks provide a sensitivity analysis. No
program is reselected using test or longer-code outcomes.

Table~\ref{tab:replication-blocks} reports all blocks; each longer-length
entry has 32 executions per policy and block. The mean primary
curve-area difference is $+0.0987$; five differences are positive and
the descriptive two-sided sign-flip value is $p=0.3281$. On the seven
complete blocks, the mean difference is $+0.1504$. These are eight
paired generation observations, not 512 independent generation trials.

\begin{table}[!htbp]
\centering\small
\setlength{\tabcolsep}{5pt}
\begin{tabular}{crrrrr}
\toprule
Block & Independent & Iterative & $\Delta$ area & $n=13$ & $n=15$\\
 & \multicolumn{2}{c}{Test successes /64} & & \multicolumn{2}{c}{Independent / iterative}\\
\midrule
1 & 25 & 19 & -0.063 & 3 / 5 & 0 / 2\\
2 & 25 & 30 & +0.007 & 3 / 10 & 0 / 5\\
3 & 24 & 35 & -0.001 & 2 / 15 & 1 / 4\\
4 & 39 & 45 & +0.021 & 17 / 16 & 5 / 7\\
5 & 26 & 64 & +0.578 & 6 / 32 & 2 / 32\\
6 & 34 & 40 & +0.095 & 15 / 8 & 0 / 3\\
7 & 25 & 62 & +0.417 & 3 / 32 & 0 / 28\\
8 & 25 & 2 & -0.264 & 1 / 0 & 0 / 0\\
\midrule
Total/mean & 223/512 & 297/512 & $+0.099$ & 50 / 118 & 8 / 81\\
\bottomrule
\end{tabular}
\caption{New generation blocks.}\label{tab:replication-blocks}
\end{table}


All 16 selected sources are distinct. Source inspection finds sustained
search in $L$, with exact distance feedback obtained through the common
charged evaluator. Rank checks and basis conversions do not compute
distance privately. One program performs uncharged basis-row additions
that leave the subspace unchanged; other proposals evaluate their
candidate subspaces. Reads of the incumbent before search return no
matrix because controller initialization is disabled. The artifact audit
reconstructs all 111 prompts and parent choices, reproduces validation
selection, checks evaluation quotas and independently verifies saved codes.

The received independent requests use 100,632 input, 41,801 output and
58,067 reported reasoning tokens; iterative requests use 165,819,
49,296 and 64,426, respectively. Equal proposal allocations therefore
do not imply equal token costs. Full transport records and the
missing-response rule are preserved in the artifact.

For Figure~\ref{fig:ablation-comparison}, the original initial program
and classical $L$-side control are also run on these execution seeds.
This fixed-control replay was added after the new policy results were
available; neither control was modified, tuned or selected using them.
Each control has 32 runs per length, and is separate from the eight
paired policy observations. These runs contextualize the success curves;
the prespecified policy endpoint and its analysis remain unchanged.


\subsection{Preserved program and generalization to longer codes}\label{app:hitl-program}
In the original four-block cohort, the program selected in block 3 is
\texttt{b02\_evolution\_g02}, whose preserved parent is
\texttt{b02\_evolution\_g01}; generation indices start at zero.
The parent already searches $L$ and penalizes distance shortfall.
The child increases the ebit-mismatch weight from 5,000 to 10,000 and
the offending weight from 1 to 100. It also mixes row replacement,
Pauli perturbations and basis-row mixing, scales mutation size with
length, and restores a saved local best with a capped temperature
adjustment after stagnation. Tinted expressions in Algorithm~\ref{alg:ablation-search} mark these edits
relative to this immediate parent; the inherited $L$-side representation
and distance-shortfall term are not tinted. These changes are observed
in the preserved source, rather than inferred from performance alone. The repeat below measures the combined changes in this final step.

\begin{algorithm}[!t]
\caption{Evolved search in the symplectic dual.}
\label{alg:ablation-search}
\small
\begin{algorithmic}[1]
\Require target $(n,k_t,c_t,d_t)$, exact-evaluation budget $N$
\Statex \(\Phi(R)=\algdelta{\(10^4\)}|R.c-c_t|+\algdelta{\(100\)}R.\mathrm{offending}+10^3\Delta_d(R)\)
  \algside{Increase penalties for entanglement mismatch and low-weight violations.}
\State $r\gets n+k_t-c_t$
\While{budget remains}
  \State $L\gets$ a random independent list of $r$ Pauli vectors
  \State $C\gets\Phi(\textsc{Eval}(L^\perp))$; \algdelta{$(L_*,C_*)\gets(L,C)$}
    \algside{Retain the best state within each restart.}
  \State $\tau\gets\algdelta{\(50\)}$; $f\gets0$; \algdelta{$g\gets0$}
  \While{$f<\algdelta{\(600\)}$ \textbf{and} budget remains}
    \State $\tau\gets\max(\algdelta{\(0.01\)},0.995\tau)$
    \State $L'\gets\algdelta{\(\textsc{MixedMove}(L)\)}$
      \algside{Add row mixing and allow larger Pauli changes at longer lengths.}
    \State \textbf{if} $\operatorname{rank}\,L'\ne r$ \textbf{then} $f\gets f+1$; \textbf{continue}
    \State $C'\gets\Phi(\textsc{Eval}((L')^\perp))$; $\delta\gets C'-C$
    \If{$\delta\le0$ \textbf{or} $U(0,1)<\exp(-\delta/\tau)$}
      \State $(L,C)\gets(L',C')$
      \If{\algdelta{$C<C_*$}}
        \State \algdelta{$(L_*,C_*)\gets(L,C)$; $g\gets0$}
      \Else
        \State \algdelta{$g\gets g+1$}
      \EndIf
      \State \textbf{if} $\delta<0$ \textbf{then} $f\gets0$ \textbf{else} $f\gets f+1$
    \Else
      \State $f\gets f+1$; \algdelta{$g\gets g+1$}
    \EndIf
    \If{\algdelta{$g>80$}}
      \algside{On stagnation, return to the best state and adjust the temperature.}
      \State \algdelta{$(L,C)\gets(L_*,C_*)$; $g\gets0$}
      \State \algdelta{$\tau\gets\min(30,1.5\tau)$}
    \EndIf
  \EndWhile
\EndWhile
\State \Return the evaluator's best verified record, if any
\end{algorithmic}
\vspace{2pt}
{\footnotesize\algdelta{Shading}: changes from the immediate parent; exact definitions in Appendix~\ref{app:hitl-program}.}
\end{algorithm}


The program cost in Algorithm~\ref{alg:ablation-search} is
\[
 \Phi(R)=10^4|R.c-c_t|+100\,R.\mathrm{offending}+10^3\Delta_d(R),
\]
where $\Delta_d(R)=\max(0,d_t-R.d)$, or $d_t$ if distance is unavailable.
If the offending count or entanglement cost is missing, the score is
$10^9$. These definitions match the preserved source.

In Algorithm~\ref{alg:ablation-search}, a random Pauli perturbation has weight
uniformly sampled from $1,\ldots,\max(2,\lfloor n/2\rfloor)-1$.
\textsc{MixedMove} chooses a row uniformly: with probability 0.3 it
replaces the row by such a perturbation, with probability 0.3 it XORs
the perturbation into the row, and otherwise it samples a second row.
If that row is different, it XORs the two rows and, with probability
0.5, adds a Pauli perturbation. A rank-deficient proposal increments
$f$ and skips evaluation. Otherwise, $f$ resets only on an accepted
improvement over the current cost, and increments on all other moves.
The counter $g$ resets on a new restart-local best and increments on
other evaluated moves. At $g>80$, the program restores that best,
sets $g=0$ and applies $\tau\leftarrow\min(30,1.5\tau)$. This adjustment
can lower a temperature still above 30; it is not always a temperature
increase. The source and its budget behavior remain unchanged.

\newpage
The original child achieves 16/16 primary-test successes and 3/4 at
each longer length. Its successful longer-code executions use
3,260--9,755 calls at $n=13$ and 7,125--16,756 at $n=15$.
Table~\ref{tab:hitl-lineage} reports every preserved lineage stage on
the 32 new seeds; the original validation selection is unchanged.
Between the first proposal and parent, the only functional search change
is the distance-shortfall penalty. Between parent and selected child,
several policies change together. This lineage was chosen for diagnosis
after its original performance, so the table describes that program's
development separately from the four-block policy comparison.

\begin{table}[!htbp]
\centering\small
\begin{tabular}{lrrrr}
\toprule
Program & $n=9$ & $n=11$ & $n=13$ & $n=15$\\
\midrule
First proposal & 27/32 & 19/32 & 13/32 & 3/32\\
Parent: add distance penalty & 28/32 & 23/32 & 29/32 & 28/32\\
Validation-selected child & 30/32 & 19/32 & 30/32 & 20/32\\
\bottomrule
\end{tabular}
\caption{Preserved lineage on fresh execution seeds.}\label{tab:hitl-lineage}
\end{table}

All eight selected programs implement sustained $L$-side repair. Each
passes candidate matrices to the common evaluator throughout search;
none substitutes a private distance test that filters solutions before
their first charged evaluation.


\FloatBarrier

\clearpage
\section{Lean verification of Theorem 1}\label{app:lean}
The formal artifact verifies the algebraic construction underlying
Theorem~\ref{thm:unified}. Its shared definitions import Mathlib; the
construction proof is a separate module. The challenge imports only
those definitions, and the submitted solution imports the construction
proof. The sources and pinned project are in the \artifactlink{}.

\lstdefinestyle{leancode}{basicstyle=\ttfamily\footnotesize,
  keywordstyle=\bfseries,backgroundcolor=\color{achieve!5},
  frame=single,rulecolor=\color{achieve!35},framerule=.4pt,
  xleftmargin=4pt,xrightmargin=4pt,framexleftmargin=4pt,
  breaklines=true,columns=fullflexible,keepspaces=true,
  showstringspaces=false,escapeinside={(*@}{@*)},
  morekeywords={theorem,by,exact,let,have,rcases,with,obtain,if,then,else}}

\subsection{Exact goal and mathematical interpretation}\label{app:lean-theorem}
Use the field $\F_q$ and enumeration $u_1,\ldots,u_q$ from
Appendix~\ref{app:theorem}. Lean denotes this field by $\mathbb F$ and
its cardinality by \texttt{Fintype.card} $\mathbb F$, which equals $q$.
The natural number \texttt{tail} is $t=n-2q>0$; hence $n=2q+t$.
The disjunction below specifies the same $q\ge3$ and odd-length binary
cases as Theorem~\ref{thm:unified}. Within namespace \texttt{EAQECCFamily.Comparator}, the
submitted declaration is:

\begin{lstlisting}[style=leancode]
theorem theorem1_exact_goal
    {(*@$\mathbb{F}$@*) : Type u} [Field (*@$\mathbb{F}$@*)] [Fintype (*@$\mathbb{F}$@*)] [DecidableEq (*@$\mathbb{F}$@*)]
    {tail : (*@$\mathbb{N}$@*)} [NeZero tail]
    (hcase : 3 (*@$\le$@*) Fintype.card (*@$\mathbb{F}$@*) (*@$\lor$@*)
      (Fintype.card (*@$\mathbb{F}$@*) = 2 (*@$\land$@*) Odd (2 * Fintype.card (*@$\mathbb{F}$@*) + tail))) :
    (*@$\exists$@*) z(*@${}_2$@*) lam : (*@$\mathbb{F}$@*), lam (*@$\ne$@*) 0 (*@$\land$@*) 1 + z(*@${}_2$@*) (*@$\ne$@*) 0 (*@$\land$@*)
      Nonempty (NormalizerSideCertificate (*@$\mathbb{F}$@*) tail z(*@${}_2$@*) lam) := by
  exact theorem1_exact hcase
\end{lstlisting}


Lean represents the radical coefficients by the function space
\[
 \mathcal T=\{f:\F_q\to\F_q\}.
\]
A function $f$ assigns one scalar to each field element; $f$ need not be linear. Addition and scalar multiplication are pointwise,
$(f+g)(u)=f(u)+g(u)$ and $(a f)(u)=a f(u)$ for $a\in\F_q$.
The enumeration $u_1,\ldots,u_q$ identifies $f$ with the vector
$\vct\tau=(f(u_1),\ldots,f(u_q))\in\F_q^q$.
This evaluation map is a linear isomorphism, so $\dim_{\F_q}\mathcal T=q$.
Thus Lean's \texttt{ClosedCoeff} type is
$\mathcal C=\F_q^2\oplus\mathcal T\cong\F_q^{q+2}$.
Write $\vct\eta=(\alpha,\beta,f)$ and $\tau_i=f(u_i)$, matching the
coefficients of $\vct a,\vct b,\vct\rho_i$ in Appendix~\ref{app:theorem}.
Relabeling the paired coordinates by $i=1,\ldots,q$, the linear map
$\Phi=\texttt{closedWord}$ is
\[
\begin{aligned}
 \Phi(\vct{\eta})_{(i,0)}&=(\alpha+\beta u_i,\,\beta+\tau_i),\\
 \Phi(\vct{\eta})_{(i,1)}&=(\alpha+\beta u_i,\,\beta z_2-\tau_i),\\
 \Phi(\vct{\eta})_{\mathrm{tail},j}&=(\alpha,\,\beta\lambda_j),
 \qquad 0\le j<t,
\end{aligned}
\]
where $\lambda_0=\lambda$ and $\lambda_j=1$ for $j>0$.
Thus $\Phi(\vct{\eta})=\alpha\vct a+\beta\vct b+
\sum_{i=1}^q\tau_i\vct\rho_i$ in the formal coordinate order.
Let $L=\operatorname{im}\Phi$, $\mathcal C_0=
\{(0,0,f):f\in\mathcal T\}$, and
$\delta=\langle\Phi(1,0,0_{\mathcal T}),\Phi(0,1,0_{\mathcal T})\rangle_s$,
where $0_{\mathcal T}$ is the zero function.
The certificate constructed by \texttt{theorem1\_exact} proves:
\begingroup
\renewcommand{\labelenumi}{(\arabic{enumi})}
\begin{enumerate}\setlength{\itemsep}{2pt}
\item \texttt{encodeInjective} and \texttt{deltaNonzero}:
$\Phi$ is injective and $\delta\ne0$.
\item \texttt{gramFormula} and \texttt{radicalIff}: for all
$\vct\eta=(\alpha,\beta,f)$ and
$\vct\eta'=(\alpha',\beta',g)$,
\[
 \langle\Phi(\vct\eta),\Phi(\vct\eta')\rangle_s
 = (\alpha\beta'-\beta\alpha')\delta,
 \qquad \Phi(\vct\eta)\in\operatorname{rad}(L)
 \iff \vct\eta\in\mathcal C_0.
\]
\item \texttt{normalizerFinrank} and \texttt{radicalFinrank}:
$\dim L=q+2$ and $\dim\mathcal C_0=q$.
\item \texttt{distanceLower} and \texttt{distanceAttained}:
\[
 \forall\vct\eta\notin\mathcal C_0,\quad
 \wt(\Phi(\vct\eta))\ge n-1,
 \qquad
 \exists\vct\eta\notin\mathcal C_0,\quad
 \wt(\Phi(\vct\eta))=n-1.
\]
\end{enumerate}
\endgroup
Injectivity identifies $\Phi(\mathcal C_0)$ with the $q$-dimensional
radical of $L$. Taking $S=L^\perp$ gives $\dim S=2n-q-2$.
The standard stabilizer correspondence
(Appendix~\ref{app:symplectic}) therefore yields
$c=(\dim S-q)/2=n-q-1$, $k=n-\dim S+c=1$ and $d=n-1$.
These are the code parameters recorded by the certificate; the
proof-bearing fields above certify their underlying symplectic construction.

For the coordinate convention, Lean's $(u_i,\mathrm{false})$ and
$(u_i,\mathrm{true})$ are our
$(i,0)$ and $(i,1)$. Its tail puts $\lambda$ first; Appendix~\ref{app:theorem}
puts it last. This permutation preserves the symplectic product and weight.
The proof chooses $z_2=0$ in characteristic two and $z_2=1$ otherwise.

Figure~\ref{fig:lean-dependencies} groups the certificate's mathematical
dependencies. The dashed arrow is the standard stabilizer interpretation
used in the paper; it is separate from the Lean-checked algebraic fields.
\begin{figure}[!htbp]
\centering
\begin{tikzpicture}[font=\footnotesize,
  box/.style={draw=black!30,rounded corners=2pt,fill=black!3,
    align=center,inner sep=4pt,minimum height=8mm},
  arr/.style={-{Latex[length=1.7mm]},draw=black!55,line width=.6pt}]
\node[box,text width=12.1cm] (map) at (0,0)
  {Coordinate map $\Phi$ and admissible parameters
   $\lambda\ne0$, $1+z_2\ne0$, $\delta\ne0$};
\node[box,text width=3.65cm,minimum height=12mm] (dim) at (-4.15,-1.6)
  {Injectivity\\$\dim L=q+2$\\$\dim\mathcal C_0=q$};
\node[box,text width=3.65cm,minimum height=12mm] (gram) at (0,-1.6)
  {Gram formula\\Radical identification\\$\operatorname{rad}(L)=\Phi(\mathcal C_0)$};
\node[box,text width=3.65cm,minimum height=12mm] (dist) at (4.15,-1.6)
  {Weight lower bound\\$\wt\ge n-1$\\Equality attained};
\node[box,fill=achieve!8,draw=achieve!55,text width=5.3cm] (cert) at (-2.6,-3.35)
  {Lean algebraic certificate\\\texttt{NormalizerSideCertificate}};
\node[box,fill=white,text width=5.1cm] (code) at (3.5,-3.35)
  {Standard stabilizer correspondence\\$[[n,1,n-1;n-q-1]]_q$};
\draw[arr] (map.south -| dim.north)--(dim.north);
\draw[arr] (map.south)--(gram.north);
\draw[arr] (map.south -| dist.north)--(dist.north);
\draw[arr] (dim.south)--(dim.south |- cert.north);
\draw[arr] (gram.south)--(cert.north);
\draw[arr] (dist.south)--([xshift=1.9cm]cert.north);
\draw[arr,dashed] (cert.east)--(code.west);
\end{tikzpicture}
\caption{Dependencies of the algebraic certificate.}
\label{fig:lean-dependencies}
\end{figure}


\FloatBarrier

\subsection{Solution and formal verification}\label{app:lean-replay}
The solution above invokes \texttt{theorem1\_exact}. Its proof body is
reproduced verbatim below. For $q\ge3$, it chooses $\lambda$ outside two
forbidden values. In the binary branch, odd length permits $\lambda=1$.
Both branches invoke \texttt{theorem1\_closed\_form}, which assembles the
certificate from the injectivity, Gram, radical, dimension and distance
lemmas in the same source module.

\begin{lstlisting}[style=leancode]
  let z(*@${}_2$@*) : (*@$\mathbb{F}$@*) := if (2 : (*@$\mathbb{F}$@*)) = 0 then 0 else 1
  have hz(*@${}_2$@*) : 1 + z(*@${}_2$@*) (*@$\ne$@*) 0 := by
    dsimp [z(*@${}_2$@*)]
    split_ifs with htwo
    (*@$\cdot$@*) simp
    (*@$\cdot$@*) simpa [one_add_one_eq_two] using htwo
  rcases hcase with hlarge | (*@$\langle$@*)hcard, hodd(*@$\rangle$@*)
  (*@$\cdot$@*) obtain (*@$\langle$@*)lam, hlam, h(*@$\delta$@*)base(*@$\rangle$@*) := exists_nonzero_add_ne_zero hlarge
        (((2 * Fintype.card (*@$\mathbb{F}$@*) + tail - 1 : (*@$\mathbb{N}$@*)) : (*@$\mathbb{F}$@*)))
    have h(*@$\delta$@*) : closedDelta (tail := tail) z(*@${}_2$@*) lam (*@$\ne$@*) 0 := by
      rw [closedDelta_eq_cast_length_sub_one_add]
      exact h(*@$\delta$@*)base
    exact (*@$\langle$@*)z(*@${}_2$@*), lam, hlam, hz(*@${}_2$@*), theorem1_closed_form z(*@${}_2$@*) lam hlam hz(*@${}_2$@*) h(*@$\delta$@*)(*@$\rangle$@*)
  (*@$\cdot$@*) let lam : (*@$\mathbb{F}$@*) := 1
    have hlam : lam (*@$\ne$@*) 0 := one_ne_zero
    have h(*@$\delta$@*) : closedDelta (tail := tail) z(*@${}_2$@*) lam (*@$\ne$@*) 0 := by
      rw [closedDelta_eq_cast_length_sub_one_add]
      exact binary_odd_delta_ne_zero hcard hodd
    exact (*@$\langle$@*)z(*@${}_2$@*), lam, hlam, hz(*@${}_2$@*), theorem1_closed_form z(*@${}_2$@*) lam hlam hz(*@${}_2$@*) h(*@$\delta$@*)(*@$\rangle$@*)
\end{lstlisting}


The proof and submitted solution use only \texttt{propext},
\texttt{Quot.sound} and \texttt{Classical.choice}. Comparator confirms
that the solution proves the specified goal under this axiom policy,
and Lean's kernel accepts the proof~\citep{LeanComparator}.
The \artifactlink{} provides the pinned project and reproducible
verification records.


\end{document}